\definecolor{papercolor}{HTML}{0668E1}
\definecolor{darkblue}{rgb}{0, 0, 0.5}
\definecolor{codepurple}{rgb}{0.58,0,0.82}
\crefname{figure}{Fig.}{Figs.}
\Crefname{figure}{Fig.}{Figs.}
\crefname{definition}{def.}{defs.}
\Crefname{definition}{Def.}{Defs.}
\crefname{algorithm}{Alg.}{Algs.}
\Crefname{algorithm}{Alg.}{Algs.}
\crefname{subsection}{Sec.}{Secs.}
\Crefname{subsection}{Sec.}{Secs.}
\crefname{section}{Sec.}{Secs.}
\Crefname{section}{Sec.}{Secs.}
\definecolor{darkred}{rgb}{0.68,0.05,0.0}
\newcommand{\samecolorfootnote}[1]{\textsuperscript{\textcolor{darkred}{#1}}}
\title{Sequential-Parallel Duality\\ in Prefix-Scannable Models}
\author{Morris Yau\samecolorfootnote{*}  \\
MIT CSAIL\\
% Cambridge, MA, USA \\
\texttt{morrisy@mit.edu} \\
\And
Sharut Gupta\samecolorfootnote{*} \\
MIT CSAIL \\
% Cambridge, MA, USA \\
\texttt{sharut@mit.edu} \\
\And
Valerie Engelmayer \\
TU Munich\\
% , School of CIT\\
% Munich, Germany\\
\texttt{valerie.engelmayer@tum.de}\\
\And
Kazuki Irie \\
Harvard University\\
% , Department of Psychology\\
% Cambridge, MA, USA\\
\texttt{kirie@g.harvard.edu}\\
\And
Jacob Andreas \\
MIT CSAIL\\
% Cambridge, MA, USA\\
\texttt{jda@mit.edu}
\And
Stefanie Jegelka \\
TU Munich, MIT CSAIL \\
% , School of CIT\\
% Munich, Germany\\
\texttt{stefje@mit.edu}
}
\begin{document}

\maketitle

\begin{abstract}
Modern neural sequence models are designed to meet the dual mandate of parallelizable training and fast sequential inference.
Recent developments have focused on various models, such as Gated Linear Attention (GLA) and Mamba, that
achieve such ``sequential-parallel duality.''
This raises a natural question: can we characterize the full class of neural sequence models that support near-constant-time parallel evaluation and linear-time, constant-space sequential inference?
We begin by describing a broad class of such models, state space models, as those whose state updates can be computed using the classic parallel prefix scan algorithm with a custom associative aggregation operator. We then define a more general class, Prefix-Scannable Models (PSMs), by relaxing the state aggregation operator to allow arbitrary (potentially non-associative) functions such as softmax attention. This generalization unifies many existing architectures, including element-wise RNNs (e.g., Mamba) and linear transformers (e.g., GLA, Mamba2, mLSTM), while also introducing new models with softmax-like operators that achieve $O(1)$ amortized compute per token and 
$\log(N)$ memory for sequence length 
$N$.  We empirically evaluate such models on illustrative language modeling and canonical synthetic tasks, including state tracking and associative recall.
Empirically, we find that PSMs retain the functional effectiveness of transformer-based architectures while matching the inference efficiency of state space models and in some cases exhibiting better length generalization than either.\looseness=-1
% \looseness=-1\footnote{Here we'll provide a link to a public GitHub repository to release our code.}   
\end{abstract}
\footnotetext{Equal contribution.}

\section{Introduction}
%kazuki: constant "depth" not defined yet?
% \sg{Very long sentence.}
%kazuki: broke the sentence into two with a semi-column

Transformers have revolutionized sequence processing by enabling
parallelizable training over the sequence dimension \citep{trafo}---unlike classic recurrent neural networks (RNNs) \citep{elman1990finding,jordan1986,hochreiter1997long}, which require sequential training; and by handling arbitrary-length sequential dependencies with a constant parameter count---unlike convolutional neural networks, which, while parallelizable over sequence elements, require more parameters to capture longer-range dependencies \citep{GehringAGYD17,oord2016wavenet,kalchbrenner2016neural,dauphin2017lm}.
However, transformers suffer from two fundamental limitations: first, their computational and memory complexities scale quadratically with sequence length \citep{trafo, katharopoulos2020transformers}, which is particularly problematic during inference; second, they have limited expressivity, i.e., there are computations they struggle to learn to perform \citep{hahn2020theoretical,BhattamishraAG20,BhattamishraPG20,merrill2023parallelism,irie2023practical,MerrillPS24,grazzi2024unlocking,strobl2024formal,siems2025deltaproduct,movahedi2025fixed}.\looseness=-1

A body of research in neural sequence modeling has focused on developing architectures that address the primary shortcomings of transformers.
In particular, recent years have seen the introduction of diverse models that target the inference time complexity problem. In these models, the inference compute requirement is linear in time and constant in memory, just like in classic RNNs, while retaining transformer-like parallelizability during training.
Such models include \textit{element-wise recurrent models}, which are derived by simplifying either fully recurrent neural networks \citep{hochreiter1997long} (e.g.\ Quasi RNNs \citep{Bradbury17} or SRU \citep{LeiZWDA18}; see also \citep{QinYZ23,indRNN, BalduzziG16, Mozer89} or linear time-invariant dynamical systems (e.g.\ Mamba \citep{gu2023mamba})---at the cost of sacrificing expressivity \citep{MerrillWGSSY20,grazzi2024unlocking}.
Another model family has been derived from \textit{linear transformers} \citep{katharopoulos2020transformers} and \textit{fast weight programmers} \citep{schmidhuber1992learning,irie2021going}, including Gated Random Feature Attention \citep{peng2021random}, DeltaNet \citep{schlag2021linear,yang2024parallelizing}, RetNet \citep{sun2023retentive}, 
GLA \citep{YangWSPK24}, mLSTM in xLSTM \citep{beck2024xlstm}, Mamba2 \citep{DaoG24},
, and versions of RWKV \citep{peng2025rwkv7gooseexpressivedynamic}.\looseness=-1
%kazuki: RWKV is too broad. they are numbered with the same RWKV name but different versions are based on different mechanisms/equations.
Finally, concurrent work by \cite{guo2025loglinearattention} develop "log linear attention": a linear attention-style 
mechanism with $log(N)$ memory that can be applied to linear attention style mechanisms with structured memory and an efficient chunkwise-parallel primitive.

These models share the fundamental property of \textit{sequential-parallel duality} (SPD)---training is parallelizable over sequence elements, while inference is sequential and its inference time complexity is linear.
This raises a natural question: 
\textit{What is the class of neural sequence models that can be evaluated in parallel in nearly constant depth, and sequentially in nearly constant space?}

In this work we aim to characterize the family of models exhibiting SPD.  In particular, we show that these models are computable using the classic parallel prefix scan algorithm \citep{blelloch1990prefix,MartinC18} with a choice of associative aggregation operator that is specific to each model.
We define a broader model class, which we call Prefix-Scannable Models (PSMs), by generalizing the aggregation operator used in prefix scan computation. By construction, this family subsumes all existing SPD-compatible models with associative state updates. More generally, it enables the design of novel models with non-associative aggregation rules, whose per-token inference cost remains amortized $O(1)$ with memory scaling 
$O(\log(N))$ in sequence length 
$N$.
An alternate view is that PSMs are a strict generalization of RNNs: they move beyond affine state updates to support general token mixing operations---including Transformer-style self-attention over local chunks---giving rise to a novel model belonging to the PSM family, which we call Transformer-PSM.

We probe Transformer-PSM in our experiments using small but illustrative tasks: next-token prediction on WikiText-103 \citep{merity2016pointer} and synthetic algorithmic tasks that test precise state tracking and retrieval \citep{MerrillPS24,grazzi2024unlocking,li2025language,arora2024simple}.  We find that Transformer-PSMs inherit certain advantages of both Transformers and State Space Models.  They preserve the associative recall capability of Transformers, whilst exhibiting an impressive ability to track state. 
Furthermore, by varying the ``chunk'' size by which we break up a sequence of tokens, we can alter the asymptotics of a PSM from SSM-like to Transformer-like---a notion we make precise in our discussion on Sequential-Parallel Duality, which we empirically demonstrate on WikiText-103.  In summary, \looseness=-1
\begin{enumerate}
\item We define the SPD family of sequence models and unify modern linear RNNs as those with state computable by the  prefix scan algorithm with a custom choice of associative aggregator.\looseness=-1  
\item We derive a strict generalization thereof, the Prefix Scannable Models (PSMs), that admit general state aggregation functions, such as softmax attention, whilst preserving parallel training in $O(N)$ compute and $O(\log N)$ memory bound at inference.  
\item We instantiate Transformer-PSM and evaluate its abilities for state tracking, associative recall, and language modeling, using canonical sequence modeling benchmarks.
% and model language finding it to outperform canonical benchmarks.      
\end{enumerate}

% \section{Modern RNNs are Parallel Prefix Scannable}
\section{Sequence Models and Sequential–Parallel Duality}\label{sec:seq_models}

% As examples to illustrate sequential-parallel duality, we first focus on two concrete classes of sequence models that encompass many existing models satisfying the ``sequential-parallel duality'' requirement, and highlight a common property shared by all of them: they are computable using the classic parallel prefix scan algorithm \citep{blelloch1990prefix,MartinC18}.
%
% As illustrative examples of sequential–parallel duality, we consider two broad classes of sequence models, unified by a common property: both can be computed via the classic parallel prefix scan algorithm \citep{blelloch1990prefix,MartinC18}.

% 
Here we formally define sequence models and sequential–parallel duality, and provide  examples. For more details on conventions, we refer to~\Cref{sec:preliminaries}.
% %kazuki: selfTODO, need to bring back a few more things from the appendix to make this part accessible/self-contained.
% % \subsection{Preliminaries}
Throughout, let $\mathcal A$ be a finite alphabet of tokens and
$\va_{0:n-1}\in\mathcal A^{n}$ an input sequence of length $n$.  Let $\mathcal{M}$ be a latent space containing the state of a sequence model.  For example, for an RNN, this is the space of the hidden state vector.   
%
%kazuki: this'd be a 2nd order detail but I might even go one step more didactic by also defining M 
% ----------------------------------------------------------------------
First, for the sequential view, we define causal sequence models by introducing \textit{state dynamics} and \textit{inference}.\looseness=-1
% \paragraph{State dynamics and inference.}
% ----------------------------------------------------------------------
% \vspace{-0.75\baselineskip}
\begin{definition}[State kernel]\label{def:state-kernel}
A \emph{state kernel} is a map
\(
      U \colon \mathcal M\times\mathcal A \to \mathcal M
\)
with an identity element $\ve\in\mathcal M$.
It induces a \emph{state sequence}
\(
      \vs_{-1}=\ve,\;
      \vs_t = U(\vs_{t-1},\,a_t)\
\)
for $t\ge 0$.  We denote by $m(n)$ the memory required to store $\vs_{n-1}$.
\end{definition}
\begin{definition}[Inference module]\label{def:inference-module}
An \emph{inference module} is a map
\(
      F \colon \mathcal M\times\mathcal A \to \mathbb R^{|\mathcal A|}
\)
producing a distribution
\(
      \hat{\vy}_t = F(\vs_{t-1},a_t)
\)
over the next token.
\end{definition}
% \sj{should $m(n)$ and $g(n)$be defined separately (can be inline)?}
%
\begin{definition}[Sequence model]\label{def:seq-model}
A pair $(U,F)$ comprising a state kernel and an inference module is called a
\emph{causal sequence model} (or simply, \emph{sequence model}). The model's \emph{memory bound} $m(n)$ is required to evaluate $U$ and $F$ once the state
$\vs_{n-1}$ is available.
\end{definition}
Second, to formalize parallel training, we define a \textit{parallel training circuit} for sequence models.
% \paragraph{Parallel training circuit.}
% ----------------------------------------------------------------------
\begin{definition}[Parallel circuit family]\label{def:parallel-circuit}
A \emph{parallel circuit family} for a sequence model $(U,F)$ is a uniform family of 
circuits $\bigl\{C_n\bigr\}_{n\ge 1}$ such that, for all
$\va\in\mathcal A^{n}$ and all $t<n$,
$\bigl[C_n(\va)\bigr]_t =
   F\bigl(\vs_{t-1},\,a_t\bigr)$,
% \[
% \boxed{\;
%   \bigl[C_n(\va)\bigr]_t\;=\;
%   F\bigl(\vs_{t-1},\,a_t\bigr)
% \;,}
% \]
% \[\; \bigl[C_n(\va)\bigr]_t\;=\;
%   F\bigl(\vs_{t-1},\,a_t\bigr)
% \;,
% \]
where $\vs_{t-1}$ is the state (\Cref{def:state-kernel}). The model's \emph{compute bound} $T(n)$ is the size of the circuit $C_n$.\looseness=-1
\end{definition}

The circuit corresponds to the \emph{training graph}: every token can be
processed simultaneously provided sufficient parallel hardware. Together, the sequential and parallel views and their tradeoffs will characterize the Sequential–Parallel Duality (\Cref{def:spd-triplet}).

\label{sec:spd}
\begin{definition}[Sequential–Parallel Duality
     $\textsf{SPD}\bigl(T(n),\,m(n)\bigr)$]
\label{def:spd-triplet}
A sequence model $(U,F)$ is said to satisfy $\textsf{SPD}\bigl(T(n),\,m(n)\bigr)$
% \[
%   \textsf{SPD}\bigl(T(n),\,m(n)\bigr)
% \]
if the following two conditions hold:
% \jda{should it be $T_C$ above?}:
%
\begin{enumerate}
\item \textbf{Parallel training.}
      There exists a uniform circuit family
      $\{C_n\}_{n\ge 1}$ of \emph{depth} $\tilde O(1)$
      and \emph{size} $T(n)$ that realises all token‑wise predictions (\Cref{def:parallel-circuit}).
\item \textbf{Sequential inference.}
      Given $\vs_{t-1}$, the pair
      $(\vs_t,\hat \vy_t)=\bigl(U(\vs_{t-1},\va_t),F(\vs_{t-1},\va_t)\bigr)$
      is computable by a depth‑$\tilde O(1)$ circuit
      using at most $m(n)$ working memory.\looseness=-1
\end{enumerate}
\end{definition}
As {illustrative examples}, we discuss the following sequence models in light of SPD.
% \paragraph{Illustrative examples.}
% ----------------------------------------------------------------------

%\begin{remark}[
\textbf{Vanilla Transformer}: \textsf{SPD‑}$(n^{2},\,n)$.
%\label{rmk:transformer}
Training computes all $n^{2}$ attention scores in parallel with circuit depth $O(1)$ and work $T(n)=\Theta(n^{2})$. At inference, each new token requires attending to and storing all $n$ past keys/values, yielding $m(n)=\Theta(n)$ memory.

% The training graph evaluates all $n^{2}$ attention weights in \emph{parallel}
% matrix form, giving circuit depth $\Theta(\log n)$ and work
% $T(n)=\Theta(n^{2})$.  At inference a new token must attend to---and be
% stored alongside---all $n$ previous keys and values, so the per‑token
% sequential cost is $g(n)=\Theta(n)$ time and $m(n)=\Theta(n)$ memory.  Hence
% the vanilla Transformer realises \textsf{SPD‑}$(n^{2},n)$.
%\end{remark}

%\begin{remark}[
\textbf{Fully recurrent RNN}: \emph{no SPD}. %]
%\label{rmk:rnn}
A strict RNN (e.g.\ LSTM, GRU) updates its hidden state through a chain of
length~$n$.  Because each step depends on the previous one, there is \emph{no}
sub‑linear‑depth circuit that simultaneously computes every output.  Such
networks therefore fall outside the SPD framework.
%\end{remark}
%kazuki: I put this transition for now

As a preview of our results: we will additionally derive the following characterization.

%\begin{remark}[
\textbf{Prefix–Scannable and Related Models}: \textsf{SPD}-$(n,1)$ and \textsf{SPD}-$(n,\log(n))$.
%\label{rmk:psm-examples}
Modern RNN architectures that admit a Blelloch‑style scan (discussed in~\Cref{sec:psm}) for their state update
% (the general Prefix–Scannable Models of~\Cref{sec:psm})
have \emph{compute bound}
$T(n)=\Theta(n)$,
% \footnote{%
  % The scan applies to {\it chunk} representations; the constant hidden suppressed in~$\Theta$ depends on chunk size.}
parallel depth $\Theta(\log n)$, and \emph{memory bound} $m(n)=\Theta(\log n)$ or
$m(n)=\Theta(1)$, depending on whether the state size grows
logarithmically or remains constant.
%kazuki: removed discussion about chunks here; too much for a preview IMO (and prone to confusion)
% Here the scan applies to {\it chunk} representations; the constant hidden suppressed in~$\Theta$ depends on chunk size.
We therefore write
%\[
  $\textsf{SPD‑}(n,\log n) \text{ or } %\quad
  \textsf{SPD‑}(n,1)$,
%\]
both of which strictly improve on the Transformer’s linear memory latency while
retaining fully parallelisable training.
%\end{remark}

% ----------------------------------------------------------------------
\section{Prefix–Scannable Models}
\label{sec:psm}
% ----------------------------------------------------------------------
Next, we define a broad family of models that obtain a sequential-parallel duality of SPD-$(n,\log(n))$. This family consists of sequence models whose \emph{training graph} can be expressed by a \emph{Blelloch prefix scan} (see the caption in~\Cref{fig:blelloch}) over chunk
representations, followed by an independent chunk‑local prediction head. The Blelloch scan takes a sequence of tokens or chunks and an aggregation operator, and computes prefixes where the aggregator is applied over the first $n$ tokens;
it computes all prefixes in $\Theta(n)$ work and
$\Theta(\log n)$ parallel depth. We refer to~\Cref{alg:static-blelloch} in \Cref{app:algo} for the full
upsweep/downsweep algorithms.
% \emph{training graph} corresponds to a \emph{Blelloch prefix scan} (Alg.~1) over chunk
% representations, followed by an independent chunk‑local prediction head. The Blelloch scan takes a sequence of tokens or chunks and an aggregation operator, and computes prefixes where the aggregator is applied over the first $k$ tokens/chunks.
%
%The central abstraction of this paper is a family of sequence models whose
%\emph{training graph} is nothing but a Blelloch prefix scan over chunk
%representations, followed by an independent chunk‑local prediction head.  
We
call these \emph{Prefix–Scannable Models} (PSMs).  To understand the topic further we first give a brief overview of the classic parallel prefix scan.  

\begin{figure}[!htb]
    \centering
    \includegraphics[width=.85\linewidth]{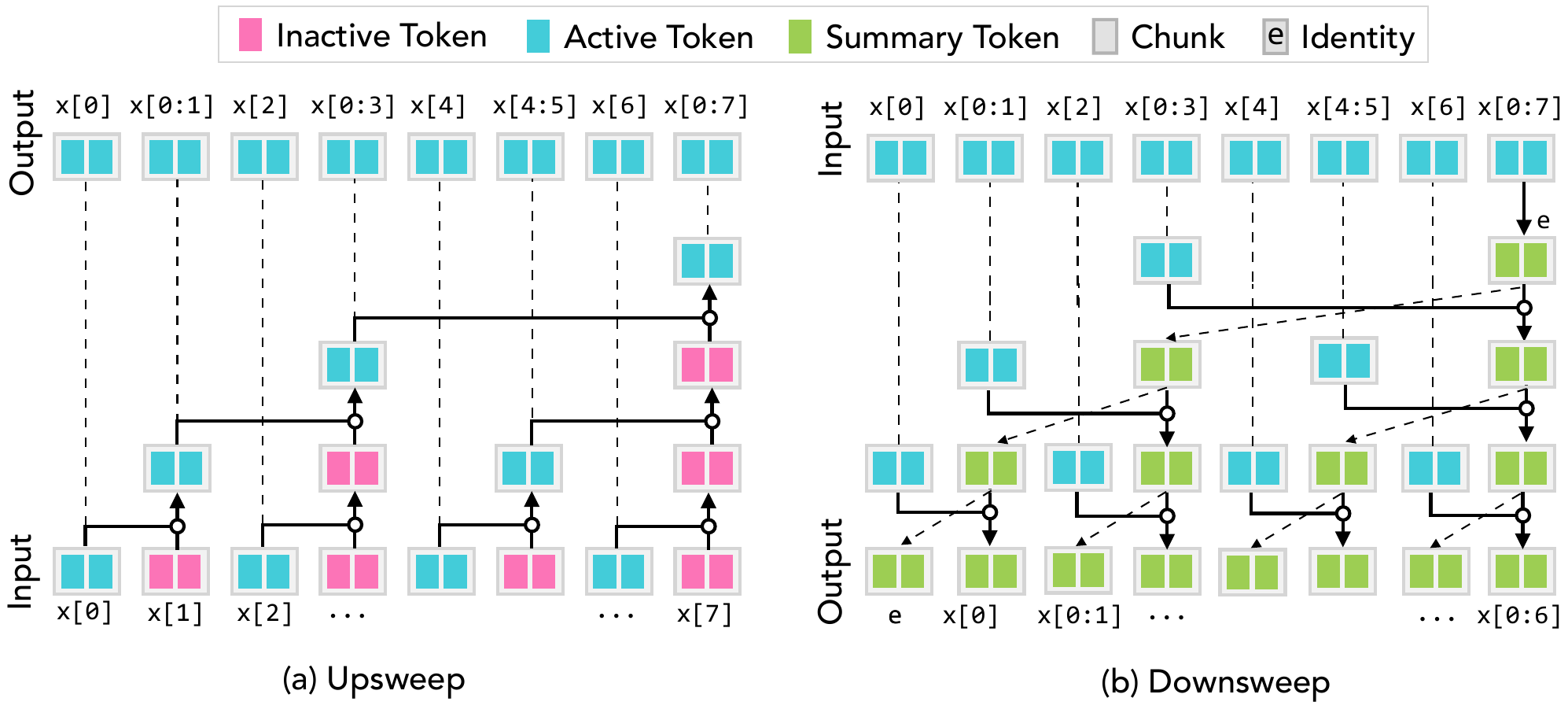}
    % \vspace{-1mm}
    \caption{
    % \kazuki{I'm assuming that all the 's' in the figures will be replaced by 'x'} \jda{consider removing the duplicated "active token" blocks in the second row of (b)}\sg{Fixed both!}
    An illustration of the Blelloch parallel scan used to compute prefix states in Prefix-Scannable Models (PSMs). Here the input has 16 tokens grouped into 8 chunks $\{\vx[0],\dots,\vx[7]\}$ (see \textbf{(a)} bottom), and the goal is to produce prefix states $\{\ve,\vx[0],\vx[0\mathpunct{:}\!1],\dots,\vx[0\mathpunct{:}\!6]\}$, where $\vx[i\mathpunct{:}\!j]$ aggregates all tokens from chunks $i$ to $j$, and $\ve$ is the identity.  
    \textbf{(a)} In the \emph{upsweep}, chunks are aggregated along a binary tree through a series of chunk aggregation operations (solid arrows), producing intermediate values and some of the final prefix states (e.g., $\vx[0\mathpunct{:}\!1],\vx[0\mathpunct{:}\!3]$).  
    \textbf{(b)} In the \emph{downsweep}, the missing prefix states are filled in by propagating values backward: $\vx[0\mathpunct{:}\!7]$ is reset to $\ve$, and copy (dotted arrows) and aggregation (solid arrows) operations complete the sequence. When each chunk is treated as an atomic element, this recovers the classic Blelloch scan.}

    \label{fig:blelloch}
    % \vspace{-8mm}
\end{figure}

\textbf{Blelloch Scan.}
Let $\mathcal M$ be a set with a binary operator $\mathsf{Agg}:\mathcal M\times\mathcal M\!\to\!\mathcal M$ and identity $\ve\in\mathcal M$.
Given $a_0,\dots,a_{n-1}\in\mathcal M$, the \emph{exclusive prefix} at index $t$ is
$P_t \coloneqq a_0 \,\mathsf{Agg}\, a_1 \,\mathsf{Agg}\,\cdots\,\mathsf{Agg}\, a_{t-1}$ (with $P_0=\ve$).
The Blelloch prefix–scan computes all $P_t$ in $O(\log n)$ parallel steps via a perfect binary tree:
(i) an \emph{upsweep} reduces adjacent pairs bottom-up until the root aggregates the whole sequence;
(ii) a \emph{downsweep} propagates prefixes top-down, using stored intermediate values so that every leaf receives its $P_t$. When the binary operator $\mathsf{Agg}$ is associative, the final prefix
array is identical to what a left-to-right sequential loop would compute.  If $\mathsf{Agg}$ is \emph{not} associative, the result is still well defined—the tree fixes a unique
\emph{parenthesisation} (see discussion in~\Cref{sec:prefix-scan})—but it may differ from the purely left-nested
order used by sequential recurrence.
We refer to this upsweep–downsweep as the \emph{static} (training) algorithm (\Cref{alg:static-blelloch}), and to the left-to-right procedure as the \emph{online} (inference) algorithm (\Cref{alg:online}), which reproduces the same tree parenthesisation.  In the next section we define Prefix Scannable Models (PSMs) by instantiating the static scan with a general choice of $\mathsf{Agg}$.

\subsection{Model Description}
% \subsection{Definition}

% \begin{algorithm}[H]
% \caption{\textsc{BlellochScan}}
% \KwIn{sequence $(\va_0,\dots,\va_{t})$, binary operator $\mathsf{Agg}^{\mathrm{Blelloch}}$,
%       identity $\ve$}
% \KwOut{exclusive prefixes $(\vs_0,\dots,\vs_{t})$ where
%         $\vs_k=\mathsf{Agg}^{\mathrm{Blelloch}}(\va_0{:}\va_{k-1})$}
% Computes all prefixes in $\Theta(t)$ work and
% $\Theta(\log t)$ parallel depth.  See
% Alg.~\ref{alg:static-blelloch} in Sec.~\ref{sec:prefix-scan} for the full
% upsweep/downsweep implementation. \jda{don't bother with an algorithm block if we're not going to include the code. but it would be nice to talk through Fig 1 here!}
% \end{algorithm}

\begin{definition}[Prefix–Scannable Model]\label{def:psm}
Fix a chunk length $c \leq n$ and partition a sequence
$\va_{0:n-1}$ into $r=n/c$ disjoint chunks
\(\mC_i=(\va_{ic},\dots,\va_{(i+1)c-1})\).
A \textbf{Prefix–Scannable Model (PSM)} is specified by three learnable
modules with depth $O(1)$:
\[
\mathsf{Enc}:\mathcal A^{c}\to\mathcal M,
\quad
\mathsf{Agg}_{\theta}:\mathcal M\times\mathcal M\to\mathcal M,
\quad
\mathsf{Inf}_{\phi}:\mathcal M\times\mathcal A^{c}\to\mathcal A^{c},
\]
and an identity element $\ve\in\mathcal M$.
%kazuki: strictly speaking e is not in M but in the `chunk' space.
\begin{enumerate}\setlength{\itemsep}{0pt}
\item \textbf{Chunk encoding}\;
      \(\vx_i=\mathsf{Enc}(\mC_i)\) for $i=0,\dots,r-1$.
\item \textbf{Prefix state}\;
      \(\{\vs_i\}_{i \in [r]}=\mathrm{BlellochScan}\bigl(\{\vx_i\}_{i \in [r]},
      \mathsf{Agg}_{\theta},\ve\bigr)\).
% \item \textbf{Chunk prediction}\;
%       \(\hat \vy_{ic{:}(i+1)c-1}=\mathsf{Inf}_{\phi}(\vs_{i-1},\mC_i)\).
\item \textbf{Chunk prediction}\;
      \(\hat \vy_{ic{:}(i+1)c-1}=\mathsf{Inf}_{\phi}(\vs_{i-1},\mC_i)\).
\end{enumerate}
\end{definition}

Note that, in terms of notation, we have $\vs_i = \vx[0\mathpunct{:}\!i]$ defined in~\Cref{fig:blelloch}.
We discuss the asymptotics of the PSM model depending on both $n$ and $c$ in~\Cref{sec:psm_details}. For now, we derive the following \emph{immediate complexity corollary} with asymptotics depending on the leading order term $n$, and focus on discussing its connections to recently proposed efficient sequence models.  ~\Cref{prop:psm-spd} follows from properties of the parallel and streaming versions of the Blelloch scan. 
%
% \subsection{Immediate complexity corollary}
%
\begin{proposition}\label{prop:psm-spd}
Every Prefix–Scannable Model is in the class 
$
  {\normalfont \textsf{SPD‑}}(n,\,\log n).
$
% \[
%   {\normalfont \textsf{SPD‑}}(n,\,\log n).
% \]
That is, its training work is $\Theta(n)$ with parallel depth
$\tilde{O}(1)$, while online inference runs in $O(1)$ amortised time and
$O(\log n)$ memory per token.
\end{proposition}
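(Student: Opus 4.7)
My plan is to verify the two bullets of $\textsf{SPD-}(n,\log n)$ separately, treating each of the three PSM modules $\mathsf{Enc}$, $\mathsf{Agg}_\theta$, $\mathsf{Inf}_\phi$ as a black box of depth $\tilde O(1)$ and constant size per invocation, as assumed in \Cref{def:psm}, and invoking the standard analyses of the static and streaming Blelloch scans (\Cref{alg:static-blelloch} and \Cref{alg:online}).

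For the parallel training bullet, I would construct the circuit $C_n$ as the composition of three stages. First, the encoder stage applies $\mathsf{Enc}$ independently to each of the $r = n/c$ chunks; this is trivially parallel with depth $\tilde O(1)$ and work $\Theta(r) = \Theta(n)$ (treating the chunk length $c$ as a constant). Second, the scan stage runs $\mathrm{BlellochScan}$ on the $r$ chunk embeddings: the standard upsweep/downsweep analysis gives $\Theta(r)$ total $\mathsf{Agg}_\theta$ calls and critical path of length $O(\log r)$, so composing with the $\tilde O(1)$ depth of each aggregation yields total depth $\tilde O(1) \cdot O(\log n) = \tilde O(1)$ and work $\Theta(n)$. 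Third, the prediction stage applies $\mathsf{Inf}_\phi$ independently to each chunk with its incoming prefix state, contributing depth $\tilde O(1)$ and work $\Theta(n)$. Composing yields the required $T(n) = \Theta(n)$ and $D(n) = \tilde O(1)$.

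For the sequential inference bullet, I would invoke the online Blelloch variant of \Cref{alg:online}. The key combinatorial facts I need are: (i) after processing chunk index $i$, the streaming scan retains exactly one cached partial aggregate per ``1'' bit in the binary expansion of $i$, so memory is bounded by $\lceil \log_2(r+1) \rceil = O(\log n)$; and (ii) a standard binary-counter amortization shows that the total number of $\mathsf{Agg}_\theta$ invocations across all $r$ chunks is $\Theta(r)$, matching the bit-flip count of a counter from $0$ to $r$. Spreading this cost over the $c$ tokens in each chunk together with the $O(c) = O(1)$ per-chunk cost of $\mathsf{Inf}_\phi$ gives the desired $O(1)$ amortized time and $\tilde O(1)$ depth per token.

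The main obstacle I expect lies in reconciling the amortized $O(1)$ claim with the non-associative setting: I must argue that the Blelloch tree fixes a canonical parenthesisation under which each intermediate aggregate is computed exactly once and shared between the prefix positions that depend on it, so the counting argument depends only on the shape of the tree rather than on algebraic laws of $\mathsf{Agg}_\theta$. I would also flag that the per-token time bound is genuinely amortized, not worst-case — chunks that trigger long carry propagations still incur $\Theta(\log n)$ work in a single step — so \Cref{prop:psm-spd} should be read in the amortized sense consistent with \Cref{def:spd-triplet}.
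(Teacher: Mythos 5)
Your proposal is correct and follows essentially the same route as the paper's own proof: the static Blelloch scan gives $\Theta(n)$ work and $\tilde O(1)$ depth for training, while the online binary-counter scan gives the $O(\log n)$ memory and amortized-$O(1)$ aggregation bounds for inference (the paper delegates these to \Cref{thm:equivalence} and \Cref{cor:memory}), with $\mathsf{Enc}$ and $\mathsf{Inf}_{\phi}$ contributing only constant per-chunk overhead. Your closing observations—that the parenthesisation argument is what ties the online state to the trained circuit in the non-associative case, and that the per-token bound is amortized rather than worst-case—match the paper's treatment and are correctly flagged.
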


\begin{proof}[Proof Sketch]
The static Blelloch scan over $n$ chunk encodings costs linear work and $\Theta(\log n)$ depth (\Cref{alg:static-blelloch}).  The streaming evaluation
replaces that scan by the online algorithm of~\Cref{alg:online}, whose~\Cref{thm:equivalence} and~\Cref{cor:memory} show $O(1)$ amortised work and $O(\log n)$ state.
The chunk‑local $\mathsf{Inf}_{\phi}$ adds constant overhead.
\end{proof}

%kazuki: checked up to here.
%kazuki: we need to bring back the definition of E_t (but not the theorem) to here so that Table 1 makes sense; Appendix text itself needs to be adjusted to remove redundancy and flow; there are still weird self-pointers referring to appendix from appendix
% \paragraph{Road map.}

\subsection{Modern RNN Layers Fit One Affine Scan}
\label{sec:affine-unification}
% ================================================================

To relate PSMs to recent models, this section shows that a broad family of recent fast‑inference layers
(\autoref{tab:affine-zoo}) are \emph{all} PSM's.   Their state kernel can be expressed as specializations of a single associative affine
state‑update template. This enables \(\textsf{SPD‑}(n,1)\) complexity. 

%kazuki: Define R (and M much earlier)
\begin{definition}[Affine recurrence]\label{def:affine-rec}
Let \((\mathcal M,+,0)\) be an additive group and
\(\blacktriangleright:R\times \mathcal M\to \mathcal M\) a fixed bilinear action of a monoid
\((R,\circ,I)\) on \(\mathcal M\).
A layer is said to have an \emph{affine state update} if its hidden state obeys
\begin{equation}
  \vs_t \;=\; \mE_t \blacktriangleright \vs_{t-1} \;+\; \vf_t,
  \quad
  \vs_{-1}=0,
  \label{eq:affine-main}
\end{equation}
where \((\mE_t,\vf_t)\in R\times \mathcal M\) are (learnable) functions of the current
chunk $x_t$.  That is $\mE_t \coloneqq \mE_{\theta}(\vx_t)$ and $\vf_t \coloneqq \vf_{\theta'}(\vx_t)$ for learnable functions $\mE_\theta$ and $\vf_{\theta'}$.   
\end{definition}

% \subsection{One associative operator}
The models in~\Cref{tab:affine-zoo} all satisfy this affine state update template and all share the following associative aggregator.  For proof see \Cref{app:proofs}.  
\begin{lemma}[Affine aggregator](Associative Affine Aggregator) \label{lem:assoc}
Define for \((\mE_i,\vf_i)\in R\times \mathcal M\)
\[
   (\mE_2,\vf_2)\oplus(\mE_1,\vf_1)
  \;=\;
  \bigl(\mE_2\!\circ \mE_1,\;\vf_2 + \mE_2\blacktriangleright \vf_1\bigr),
  \quad
  \ve=(I,0).
\]
Then \((R\times \mathcal M,\oplus,\ve)\) is a monoid—$\oplus$ is associative with
identity $\ve$—and
\[
  (\mE_t,\vf_t)\oplus\cdots\oplus(\mE_0,\vf_0)
  \;=\;
  \bigl(\bar \mE_t,\;\vs_t\bigr),
\]
where \(\vs_t\) is the state given by~\Cref{eq:affine-main} and $\bar \mE_t$ is an auxiliary variable.  
\end{lemma}
\vspace{-2mm}
% \subsection{Theorem: all affine layers are \texorpdfstring{$\textsf{SPD‑}(n,1,1)$}{SPD‑(n,1,1)}}
Once written in the affine update form, their binary operator is associative, hence each layer is a Prefix–Scannable Model with \(\textsf{SPD‑}(n,1)\) complexity. For formal theorem and proof see ~\Cref{thm:affine-spd}.  
%Finally, the following theorem states that all affine layers are \texorpdfstring{$\textsf{SPD‑}(n,1,1)$}{SPD‑(n,1,1)}.
Importantly,
we can instantiate \Cref{def:psm} with
associative aggregators capturing learnable function families like linear dynamical systems and Gated Linear Attention.
Further discussion and the corresponding theorems can be found in \Cref{sec:affine-unification}.  Next, we turn to general PSM's, which enables new (non‑associative) aggregators, most notably softmax attention.

\begin{table}[t]
\centering
\caption{Representative examples of recently proposed layer types that cast into the affine state-update template
(\Cref{eq:affine-main}).  The same associative aggregator
\((\mE,\vf) \oplus (\mE',\vf') \mapsto (\mE \circ \mE', \vf + \mE\blacktriangleright  \vf' )\) is shared by all, and therefore, they are all in
\(\textsf{SPD‑}(n,1)\) by~\Cref{thm:affine-spd}.}
\label{tab:affine-zoo}
\footnotesize
% \scriptsize
\resizebox{1.\textwidth}{!}{%
\begin{tabular}{@{}lccc@{}}
\toprule
\textbf{Model family} &
\(\displaystyle \mE_t\blacktriangleright \vs_{t-1}\) &
\(\displaystyle \vf_t\) &
 Gate / operator \\ \midrule
Linear Attention \citep{katharopoulos2020transformers}                & \(\vs_{t-1}\)                       & \(\vv_t \vk_t^{\!\top}\) & identity \(I\) \\
DeltaNet \citep{schlag2021linear}  & \(\vs_{t-1}(\mI-\beta_t \vk_t \vk_t^{\!\top})\) & \(\beta_t \vv_t \vk_t^{\!\top}\) & projector \\
Gated DeltaNet \citep{yang2024gated} & \(\alpha_t  \vs_{t-1}(\mI-\beta_t \vk_t \vk_t^{\!\top})\) & \(\beta_t \vv_t \vk_t^{\!\top}\) & projector \\
% DeltaNet / Gated DeltaNet       & \(s_{t-1}(I-\beta_t k_t k_t^{\!\top})\) & \(\beta_t v_t k_t^{\!\top}\) & projector \\
RetNet \citep{sun2023retentive} & \(\gamma \vs_{t-1}\)              & \(\vv_t \vk_t^{\!\top}\) & scalar gate \(\gamma\) \\
mLSTM  \citep{beck2024xlstm} & \(f_t \vs_{t-1}\)    & \(i_t \vv_t \vk_t^{\!\top}\) & scalar gate $f_t$ \\
Gated RFA \citep{peng2021random}  & \(g_t \vs_{t-1}\)                   & \((1-g_t) \vv_t \vk_t^{\!\top}\) & scalar gate $g_t$ \\
% RetNet / RWKV / Mamba‑2         & \(\gamma_t s_{t-1}\)              & \(v_t k_t^{\!\top}\) & scalar gate \(\gamma_t\) \\
S4 / S6  \citep{GuGR22} & \(e^{-\bm{\alpha}_t}\!\odot \vs_{t-1}\)   & \(\mB\odot(\vv_t 1^{\!\top})\) & diagonal gate \\
% GLA, mLSTM   & \((1-\alpha_t)\!\odot s_{t-1}\)    & \(v_t k_t^{\!\top}\) & diagonal gate \\
Mamba  \citep{gu2023mamba}  &  \(\bar{A}(\vx_t) \vs_{t-1}\) & \(\bar{B}(\vx_t)\vx_t\) & diagonal gate \\
GLA  \citep{YangWSPK24}  & \(1\bm{\alpha}_t^{\!\top} \!\odot \vs_{t-1}\)   & \(\vv_t \vk_t^{\!\top}\) & diagonal gate \\
\bottomrule
\end{tabular}
}
% \vspace{-2mm}
\end{table}
%kazuki: see my comments above. All variations should be defined!

\subsection{Beyond Affine State Recurrence: PSMs with General Aggregation}
\label{sec:prefix-scan}

The \emph{parallel prefix–scan} computes per-position prefixes with $\mathcal O(n)$ work and $\mathcal O(\log n)$ depth when the binary operator is associative~\citep{blelloch1990prefix}. We generalize this view to \emph{non-associative} operators (e.g., softmax attention).  For a longer discussion with extensive proofs, see \Cref{sec:prefix-scan}.  

The key issue is \emph{parenthesisation}: for a non-associative $\mathsf{Agg}$, different groupings of
$x_0 \,\mathsf{Agg}\, x_1 \,\mathsf{Agg}\,\cdots\,\mathsf{Agg}\, x_{t-1}$
produce different values. The Blelloch scan resolves this by fixing a single full binary tree (upsweep/downsweep), hence a unique parenthesisation.
Let
\begin{equation}
  \mathsf{Agg}:\ \mathcal M \times \mathcal M \to \mathcal M,\qquad
  \text{identity }\ve\in\mathcal M,
  \label{eq:agg-nonassoc}
\end{equation}
with no associativity assumption unless stated. Define $\pi_{\text{Blelloch}}$ as the binary-tree parenthesisation induced by the static scan.
The \emph{static} Blelloch scan (\Cref{alg:static-blelloch}) computes, for every $t$,
\[
  \vs_t \;=\; (x_0 \,\mathsf{Agg}\, x_1 \,\mathsf{Agg}\,\cdots\,\mathsf{Agg}\, x_{t-1})
  \;\;\text{evaluated under } \pi_{\text{Blelloch}}.
\]
This matches the sequential left-to-right recurrence when $\mathsf{Agg}$ is associative; otherwise it is still well-defined value for the fixed tree. The work is $\mathcal O(n)$, and depth is $\mathcal O(\log n)$.

\textbf{Online binary counter (inference).}
The \emph{online} variant (\Cref{alg:online}) maintains at most one root per block size $2^k$; inserting $x_t$ performs the usual binary carry with $\mathsf{Agg}$. The current prefix is the most significant bit (MSB) $\to$ least significant bit (LSB) fold of occupied roots. This reproduces \emph{exactly} $\pi_{\text{Blelloch}}$ for each $t$ while using $\mathcal O(\log n)$ memory.

\label{app:algo}
\begin{minipage}{0.49\textwidth}
\begin{algorithm}[H]
% \begin{algorithm}[!htb]
\footnotesize
  \caption{ $\textsc{StaticBlellochScan}$}\label{alg:static-blelloch}
  \KwIn{$\bigl(\{\vx_i\},\mathsf{Agg}_{\theta},\ve\bigr)$: Array of encoded chunks $\vq[\vx_1\,...\,\vx_{r-1}]$ with $r=2^{k}$ (power of two) chunks;
        operator $\mathsf{Agg}$ with identity $\ve$}
  \KwOut{Exclusive prefixes written back into $\vq$}

  \medskip\textbf{Representation.}  Store the complete binary tree in the usual
  heap layout $T[1\,..\,2n-1]$:
  \begin{enumerate}
    \item leaves $T[n+i]\leftarrow \vq[i]$ for $i=0,\dots,r-1$;
    \item an internal node $v$ has children $2v$ and $2v{+}1$.
  \end{enumerate}

  \medskip\textbf{Upsweep (reduction).}
  \ForPar{$v\gets n-1$ \KwDownTo $1$}{
        $T[v] \gets \mathsf{Agg}\!\bigl(T[2v],\,T[2v+1]\bigr)$
  }

  \medskip\textbf{Downsweep (prefix propagate).}
  Allocate $P[\;]$; set $P[1]\gets \ve$ \tcp*[l]{root gets identity}
  \ForPar{$v\gets 1$ \KwTo $n-1$}{
        $P[2v]   \gets P[v]$\;
        $P[2v+1] \gets \mathsf{Agg}\!\bigl(P[v],\,T[2v]\bigr)$
  }

  \medskip\textbf{Write back.}
  \ForPar{$i\gets 0$ \KwTo $n-1$}{
        $\vq[i] \gets P[n+i]$
  }
\end{algorithm}
\end{minipage}
% \hfill
\hspace{4mm}
\begin{minipage}{0.48\textwidth}
\begin{algorithm}[H]
% \begin{algorithm}[!htb]
 \footnotesize  \caption{$\textsc{BinaryCounterUpdate}$}\label{alg:online}
  \KwIn{$(\texttt{root},\vx,\mathsf{Agg}_{\theta},\ve)$: Stream of encoded chunks $\vx_0,\dots,\vx_{r-1}$; operator $\mathsf{Agg}$ with identity $\ve$}
  \KwOut{Prefix value $\vp_t$ for each $t$ (Blelloch parenthesisation)}

  \textbf{State:}\\ \texttt{root[$k$]} stores the root of the current block of
  size $2^k$ or is \texttt{empty} initialise all to \texttt{empty}.

  \medskip
  \For{$t\gets 0$ \KwTo $r-1$}{
        $carry \gets \vx_t$\\ $k\gets 0$\\
        \While{$\texttt{root}[k]\neq\texttt{empty}$}{
              $carry \gets \mathsf{Agg}\!\bigl(\texttt{root}[k],\,carry\bigr)$\\
              \texttt{root[$k$]} $\gets \texttt{empty}$\\
              $k \gets k{+}1$\\
        }
        \texttt{root[$k$]} $\gets carry$\tcp*{place merged tree}

        $\vp \gets \ve$\\
        \For{$k\gets \lfloor\log_2(t{+}1)\rfloor$ \KwDownTo $0$}{
              \If{$\texttt{root}[k]\neq\texttt{empty}$}{
                   $\vp \gets \mathsf{Agg}(\vp, \,\texttt{root}[k])$\\}}
        \textbf{emit} $\vp$\\
  }
\end{algorithm}
\end{minipage}

Together, the static and online scans yield PSMs in $\text{SPD}(n,\log n)$: linear work for training and logarithmic memory for streaming inference. (See \Cref{fig:inference} for the chunked Transformer-PSM inference.) We obtain the following \textit{correctness and complexity analysis}. We defer proofs to \Cref{sec:prefix-scan}

\begin{figure}[t]
    \centering
    \includegraphics[width=0.9\linewidth]{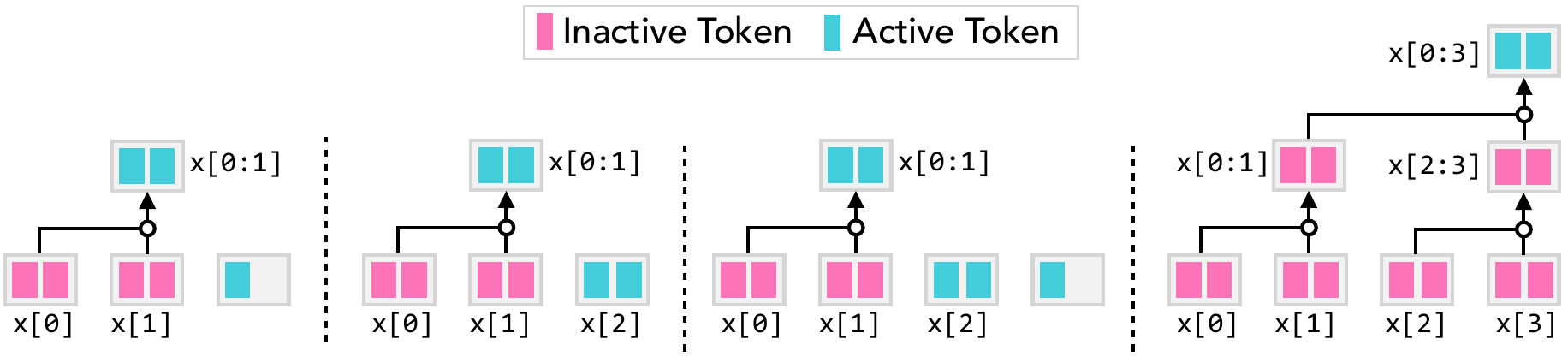}
    \vspace{-2mm}
    \caption{An illustration of the autoregressive state computation of ``Transformer-PSM'' (\Cref{sec:exp}) at inference time. Here the model uses a chunk size of 2.
    From left to right, a single new token is fed to the model at a time. \textbf{Two first figures:} when predicting tokens in chunk $\vx[2]$, the model only requires tokens from the prefix state $\vx[0\mathpunct{:}\!1]$ and those within $\vx[2]$. \textbf{Third figure:} predicting tokens in chunk $\vx[3]$ requires the prefix state $\vx[0\mathpunct{:}\!1]$, and chunks $\vx[2]$ and $\vx[3]$. \textbf{Last figure:} once all tokens in chunk $\vx[3]$ are processed, a new prefix state $\vx[0\mathpunct{:}\!3]$ is computed, which is later used to predict tokens in $\vx[4]$, and so on. Prefix state $\vs_i$ corresponds to $\vs_i = \vx[0\mathpunct{:}\!i]$.}
    % \caption{Caption \jda{consider merging this and the above into one figure with two subfigures, and use different colors to avoid overloading. what exactly are we trying to show here?}}
    \label{fig:inference}
    % \vspace{-9mm}
\end{figure}

\begin{restatable}[Equivalence with static Blelloch]{theorem}{EquivBlelloch}\label{thm:equivalence}
Let $\vp_t$ be the value emitted at time $t$ by~\textup{\Cref{alg:online}}.  Then $\vp_t$ equals the exclusive prefix
returned by the static Blelloch scan, \emph{regardless of whether
$\mathsf{Agg}$ is associative}.
\end{restatable}

\begin{restatable}[Memory bound]{corollary}{MemoryBound}\label{cor:memory}
After $t{+}1$ chunks \textup{\Cref{alg:online}} stores at most
$\lceil\log_2(t{+}1)\rceil$ root values; hence the worst–case space usage is
$\mathcal O(\log n)$.
\end{restatable}
\textbf{Work.}  Inserting a new element touches exactly the trailing
\texttt{1}--bits of $t$; the expected number of such bits is~$2$, so the
amortised number of $\mathsf{Agg}$ calls per element is constant.

Together,~\Cref{thm:equivalence} and~\Cref{cor:memory} show
that the online binary–counter scan is an \emph{optimal–space, streamable}
realisation of the Blelloch parenthesisation, {extending prefix–scan techniques
to non–associative operators without increasing asymptotic cost in time}.  This
flexibility enables a larger class of \emph{prefix–scannable models}: sequence models
whose per–token state update is any binary operator that admits
$\mathcal O(\log n)$ space $O(1)$ time online evaluation via the mechanism above. We provide further analytical details of PSMs in \Cref{sec:psm_details}.

% PSMs use a prefix-scan structure that enables polylog-depth training and logarithmic-memory online decoding, unifying scan-friendly models under a common framework (see~\Cref{sec:psm_details} for analytical details).

\subsection{Transformer-PSM}
\label{sec:tpsm}

In this section we instantiate $\mathsf{Enc}$, $\mathsf{Agg}$, $\mathsf{Inf}$ to concretely define the Transformer-PSM architecture that we use to run our empirics in \Cref{sec:exp} to validate our theoretical predictions.  The model is uniquely specified by the following three modules. 
% \paragraph{Transformer-PSM modules} A Transformer-PSM is uniquely specified by the hyperparameters for the three modules.  

\textbf{Encoder} ($\mathsf{Enc}$): This is a simple embedding layer that transforms discrete vocabulary tokens into continuous vectors, implemented as a standard nn.embedding layer.

\textbf{Aggregation} $(\mathsf{Agg}_\theta)$. A GPT-2 style Transformer (hidden dim $d$, $H$ heads, $L$ layers) with a \emph{bidirectional} attention mask,
$
\mathsf{GPT}^{b}_\theta:\ \mathbb{R}^{d\times 2c} \to \mathbb{R}^{d\times 2c}.
$
% \[
% \mathsf{GPT}^{b}_\theta:\ \mathbb{R}^{d\times 2c} \to \mathbb{R}^{d\times 2c}.
% \]
Given two chunk states $x_i,x_j\in\mathbb{R}^{d\times c}$, define token-concat $[x_i\,|\,x_j]\in\mathbb{R}^{d\times 2c}$ and the \emph{right-half slice}
$\mathrm{RH}(Y)\coloneqq Y[\,:\,,\,c:2c]\in\mathbb{R}^{d\times c}$. We write
% $
% \mathsf{Agg}_\theta(x_i,x_j)\ \coloneqq\ \mathrm{RH}\!\bigl(\mathsf{GPT}^b_\theta([x_i\,|\,x_j])\bigr)\ \in \mathbb{R}^{d\times c}.
% $
\[
\mathsf{Agg}_\theta(x_i,x_j)\ \coloneqq\ \mathrm{RH}\!\bigl(\mathsf{GPT}^b_\theta([x_i\,|\,x_j])\bigr)\ \in \mathbb{R}^{d\times c}.
\]

\textbf{Inference} $(\mathsf{Inf}_\phi)$. A GPT-2 style Transformer (hidden dim $d$, $H$ heads, $L$ layers) with a \emph{causal} mask,
$
\mathsf{GPT}^{c}_\phi:\ \mathbb{R}^{d\times 2c} \to \mathbb{R}^{d\times 2c}.
$
% \[
% \mathsf{GPT}^{c}_\phi:\ \mathbb{R}^{d\times 2c} \to \mathbb{R}^{d\times 2c}.
% \]
Given a prefix state $\vs_{t-1}\in\mathbb{R}^{d\times c}$ and a token chunk $\mathsf{Enc}(C_t) \in \mathbb{R}^{d \times c}$,
\[
 \mathsf{Inf}(\vs_{t-1},\vx_t)\ \coloneqq\ \mathrm{RH}\!\bigl(\mathsf{GPT}^{c}_\phi([\,\vs_{t-1}\,|\,\mathsf{Enc}(C_t)\,])\bigr)\ \in \mathbb{R}^{d\times c},
\]
which we interpret as per-token logits for $C_t[1:]$ (next-token prediction within the chunk).  Once the three modules are defined, we train Transformer-PSM with \Cref{alg:psm-train} and inference with \Cref{alg:psm-decode}.

%-------------- Algorithms --------------
\begin{minipage}{0.39\textwidth}
\begin{algorithm}[H]
% \begin{algorithm}[!htb]
\footnotesize
  \caption{Transformer-PSM Training (static scan over chunks)}\label{alg:psm-train}
  \KwIn{Sequence of tokens $\va_{0:n}$, \(\mathsf{Enc},\mathsf{Agg}_{\theta},
        \mathsf{Inf}_{\phi}\), chunk size $c$}
  \KwOut{Predictions $\hat{\vy}_{0:n}$}

  $r \gets n/c$\tcp*{number of chunks}

  \medskip
  \ForPar{$i \gets 0$ \KwTo $r$}{\label{line:chunk-enc}
        $\vx_i \gets \mathsf{Enc}\!\bigl(\va_{ic{:}(i+1)c-1}\bigr)$
  }

  $\{\vs_i\}_{i=0}^{r} \gets
      \textsc{StaticBlellochScan}\bigl(\{\vx_i\},\mathsf{Agg}_{\theta},\ve\bigr)$ \tcp*{~\Cref{alg:static-blelloch}}\

  % \medskip
  \ForPar{$i \gets 0$ \KwTo $r$}{\label{line:inf}
        $\hat{\vy}_{ic{:}(i+1)c-1} \gets
           \mathsf{Inf}_{\phi}\!\bigl(\vs_{i-1},\,\va_{ic{:}(i+1)c-1}\bigr)$
  }
\end{algorithm}
\end{minipage}
% \hfill
\hspace{6mm}
\begin{minipage}{0.56\textwidth}
\begin{algorithm}[H]
% \begin{algorithm}[!htb]
\footnotesize
  \caption{Transformer-PSM Inference (binary‑counter scan)}\label{alg:psm-decode}
  \KwIn{Streamed tokens $\va_t$, \(\mathsf{Enc},\mathsf{Agg}_{\theta},
        \mathsf{Inf}_{\phi}\), chunk size $c$}
  \KwOut{Streaming predictions $\hat{\vy}_t$}

  \textbf{State:}\\
  \quad\(\texttt{root}[k]\gets\texttt{empty}\) for all $k$ (cf.\ \Cref{alg:online})\\
  \quad\(\texttt{buf}\gets[\,]\)  \tcp*{collects current chunk}

  \medskip
  \For(\tcp*[f]{token index $t=0,1,\dots$}){each $\va_t$}{
        append $\va_t$ to \texttt{buf}\;

        \If(\tcp*[f]{completed one chunk}){$|\texttt{buf}|=c$}{
            $\vx \gets \mathsf{Enc}(\texttt{buf})$\;
            $\vs \gets \textsc{BinaryCounterUpdate}%
                    (\texttt{root},\vx,\mathsf{Agg}_{\theta},\ve)$\tcp*{~\Cref{alg:online}}
            $\hat{\vy}_{t-c+1:t} \gets \mathsf{Inf}_{\phi}(\vs,\texttt{buf})$\;
            clear \texttt{buf}\;
        }
  }
\end{algorithm}
\end{minipage}

\section{Experimental Results}
\label{sec:exp}
%kazuki: we need a transition here; ideally explain what's the goal of these experiments.
%kazuki: I put something, please read and correct!
% Next, we empirically evaluate the capabilities of the novel sequence model derived from the prefix scannable model class (\Cref{sec:psm_details}) by using a non-associative aggregation operator.
% We focus on a model defined by using the Transformer's causal self-attention operation as the concrete choice of both $\mathsf{Agg}_{\theta}$ and $\mathsf{Inf}_{\phi}$---we refer to this model as ``Transformer-PSM'' (T-PSM; see~\Cref{fig:inference} for an illustration of its inference). 
% More specifically, $\mathsf{Agg}_{\theta}$ is applied to a concatenation of two chunks, which first produces a sequence with the length of two chunks; unless otherwise stated, we drop the first half of the produced sequence to obtain an output with the length of a single chunk.
% The same aggregation $\mathsf{Agg}_{\theta}$ is applied at different nodes of the tree during both the upsweep and downsweep steps, resulting in parameter sharing across depth \citep{dehghani2018universal}.\looseness=-1

The main goal of our experiments is to evaluate and explore the capabilities and properties of Transformer-PSM (\Cref{sec:tpsm}).
For this, we conduct experiments on representative sequence learning tasks: a synthetic algorithmic task requiring state-tracking (\Cref{exp:s5}), a synthetic task for associative recall, and language modeling (\Cref{exp:lm}).
Each experiment was conducted on a single NVIDIA V100-32GB GPU.
All experiments were implemented using the PyTorch framework \citep{NEURIPS2019_bdbca288}.\looseness=-1

%kazuki: adjust alignment
% \begin{figure}[!htb]
\begin{figure}[t]
\centering
\begin{minipage}{0.48\textwidth}
\centering
\includegraphics[width=0.9\textwidth]{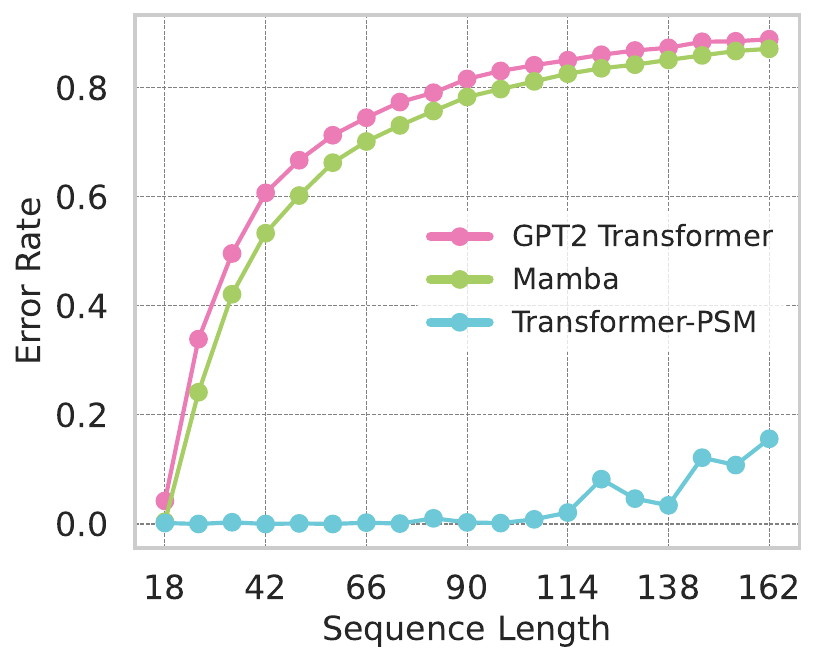}
\caption{Error rate on the state tracking $S_5$ task. After training on sequences with lengths up to 18, Transformer-PSM generalizes to more than 160 tokens, far beyond Transformer and Mamba.}
\label{fig:state-tracking}
\end{minipage}
\hfill
\begin{minipage}{0.48\textwidth}
\centering
\includegraphics[width=0.9\textwidth]{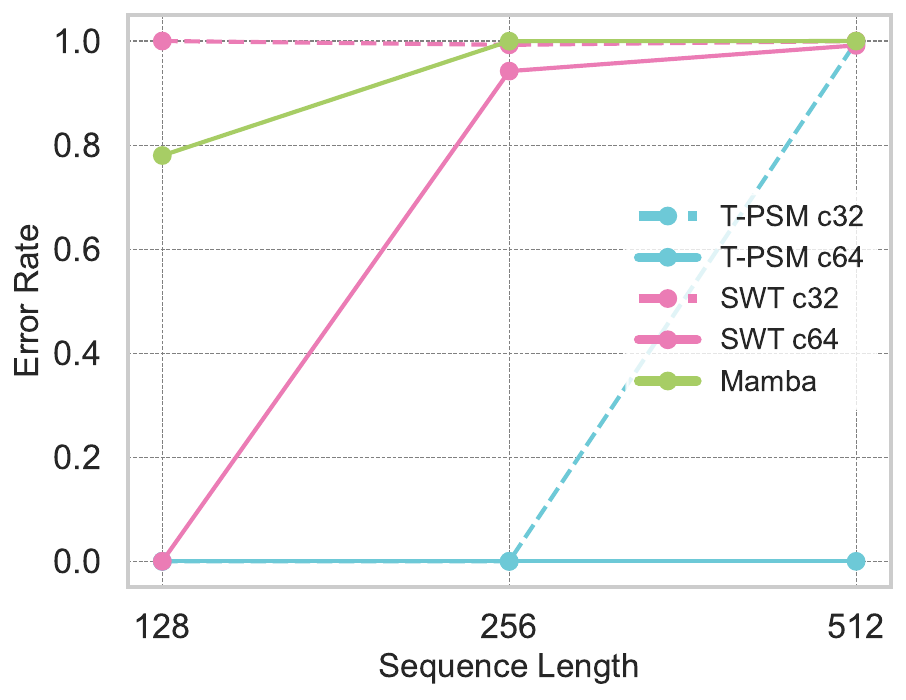}
\caption{Error rate on MQAR of Transformer-PSM (T-PSM), Sliding Window Transformer (SWT) and Mamba. Evaluated lengths are in-distribution.} %\jda{can we show this as an error rate rather than an accuracy to align with other graphs?}
\label{fig:mqar}
\end{minipage}
\vspace{-5mm}
\end{figure}

% \begin{wrapfigure}{r}{0.4\textwidth}
%     \centering
%     \vspace{-15mm}
%     \includegraphics[width=1\linewidth]{figures/err_vs_seq_len.pdf}    
%     \caption{Error rate on MQAR of Transformer-PSM (T-PSM), Sliding Window Transformer (SWT) and Mamba. Evaluated lengths are in-distribution. %\jda{can we show this as an error rate rather than an accuracy to align with other graphs?}
%     }
%     \vspace{-5mm}
%     \label{fig:mqar}
% \end{wrapfigure}

\subsection{State Tracking $S_5$}
\label{exp:s5}
The $S_5$ state tracking problem ~\citep{MerrillPS24, kim2023, li2025language} is the formal version of the ``permute cups and balls'' challenge, 
% \begin{wrapfigure}{r}{0.4\textwidth}
%     \centering
%     % \vspace{-15mm}
% \includegraphics[width=0.4\textwidth]{figures/state_tracking.pdf}
%     \caption{Error rate on the state tracking $S_5$ task. After training on sequences with lengths up to 18, Transformer-PSM generalizes to much longer sequences with more than 160 tokens, far beyond GPT2-Transformer and Mamba.
%     }
%     % \vspace{-15mm}
%     \label{fig:state-tracking}
% \end{wrapfigure}
where a sequence of permutations is composed, with the objective of tracking the resulting permutation at each time step. Problems as diverse as tracking finite state automata and evaluation of boolean expressions can be reduced to this task.
Naturally, as the sequence of permutations lengthens, this task becomes increasingly difficult for a constant-depth model that has a constant budget for sequential computation. Indeed, the $S_5$ state tracking task is $\mathsf{NC}^1$ complete \citep{Barrington1986}.  
It is known to be difficult for both standard Transformers and linear RNNs such as Mamba \citep{MerrillPS24,grazzi2024unlocking}.

%because of the marginal stability of the permutation matrix.  

%The $S_5$ task is designed to evaluate a sequence model's ability to accurately maintain the state of a system that exhibits complex dependencies over time. 
%The $S_5$ alludes to the symmetric group of degree 5, a non-abelian group with 120 elements each corresponding to a permutation. For example the token $'13254'$ encodes a permutation  that swaps the third and second elements and swaps the fifth and fourth elements.  The $S_5$ task is to predict the state at every timestep given a sequence of these permutation tokens.    

We train from scratch on sequences of length 4 to 18 in a curriculum and subsequently evaluate on lengths up to 180 to test for length generalization.  We generate 100,000 sequences per length and train for 20 epochs for each of three different models: (1) a standard GPT2 model with 12 layers, 12 heads, 768 hidden dimensions; (2) a 370M-parameter Mamba model with 48 layers and a 1024-dimensional hidden state; (3) Transformer-PSM with $(d=768,H=1,L=1)$ for $\mathsf{Agg}$, $(d=768,H=1,L=1)$ layer for $\mathsf{Inf}$, and chunk size $c=1$.  
All models are trained with Adam with dropout 0.1, weight decay 0.01, learning rate $10^{-4}$.   

\Cref{fig:state-tracking} shows the results. We find that whilst Mamba slightly outperforms GPT2, the new T-PSM has remarkably low error rate even for sequences significantly longer than those observed during training, showing that these models exhibit strong length generalization for state tracking tasks. 

% \begin{figure}[t]
%     \centering
%     \includegraphics[width=0.5\textwidth]{figures/state_tracking.pdf}
%     \caption{Error rate on the state tracking task. After training on sequences up to length 18, Transformer-PSM generalizes to much longer sequences with more than 160 tokens, far beyond GPT2-Transformer and Mamba. 
%     % \val{Maybe mention which task this is in the caption?}\kazuki{great idea. done}
%     }
%     \label{fig:state-tracking}
% \end{figure}

% \b egin{figure}[!htb]
% \centering
% \begin{minipage}{0.5\textwidth}
% \centering
% \includegraphics[width=\textwidth]{figures/state_tracking.pdf}
% \caption{}
% \label{fig:state-tracking}
% \end{minipage}
% \end{figure}

\subsection{Multi Query Associative Recall (MQAR)}
\label{exp:mqar}
% \begin{figure}[!htb]
% \centering
% \begin{minipage}{0.49\textwidth}
% \centering
% \includegraphics[width=0.8\textwidth]{figures/err_vs_seq_len.pdf}
% \caption{In-distribution error rate of Transformer-PSM (T-PSM), Sliding Window Transformer (SWT) and Mamba on MQAR.}
% \label{fig:mqar_in}
% \end{minipage}
% \hfill
% \begin{minipage}{0.49\textwidth}
% \centering
% \includegraphics[width=0.9\textwidth]{figures/len_general.pdf}
% \caption{Error rates on MQAR after training on sequences of length 256. With small enough chunk sizes, T-PSM generalizes to longer sequences than a Transformer. }
% \label{fig:mqar-out}
% \end{minipage}
% \end{figure}

% \begin{wrapfigure}{r}{0.4\textwidth}
%     \centering
%     \vspace{-15mm}
%     \includegraphics[width=1\linewidth]{figures/err_vs_seq_len.pdf}    
%     \caption{Error rate on MQAR of Transformer-PSM (T-PSM), Sliding Window Transformer (SWT) and Mamba. Evaluated lengths are in-distribution. %\jda{can we show this as an error rate rather than an accuracy to align with other graphs?}
%     }
%     \vspace{-5mm}
%     \label{fig:mqar}
% \end{wrapfigure}
%kazuki: TODO if we will have no other figures, this one should be put side by side with Figure 3; to avoid weird spacing in the next page

In Associative Recall, the task is to recall whatever value followed a key earlier in a given sequence. MQAR extends this task to multiple such key-value pairs to increase the memory demand \citep{zoology}.
While constant state size recurrent models struggle with this task, a 2-layer transformer excels by solving it perfectly.
% With its $O(n^2)$ state size, a 2-layer vanilla transformer easily solves this task perfectly, while recurrent models with their $O(1)$ memory struggle. 
To gauge where on this spectrum our model falls, we train different models on MQAR for 64 epochs with vocabulary size 8192 and 8 key-value pairs. %\kazuki{I'm not sure I understand the sentence below. maybe remove?} \val{Yeah I had mixed up the roles of queries and values. It means we changed the default parameters for sampling the synthetic data to make the task harder. It is (probably) the reason why Mamba cannot do it, even though there is literature in which it can. So I wouldn't remove, hope it is clearer now?}
In the typical setting of this task, sequences are constructed in a way that a key is queried shortly after it appears for the first time; here we do not use such a bias and sample queries uniformly, which makes the task harder than the standard setting.
% In most of the literature, keys are biased to be queried shortly after appearing for the first time. This simplifies the task especially for models with short attention spans and little memory. We remove this bias and sample gap sizes uniformly.

Here we instantiate Transformer-PSM with $(d=256,H=1,L=2)$ $\mathsf{Agg}$, $(d=256,H=1,L=2)$ $\mathsf{Inf}$.  We also use a learnable linear projection to compress the chunks instead of taking the right half. The chunk size is 32 or 64.
For comparison, we include both Mamba and Sliding Window Transformer (SWT) baselines \citep{beltagy2020longformerlongdocumenttransformer, zaheer2020big}.  The
SWT is a GPT2 model with $(d=256,H=1,L=4)$, where we use a sliding window size of 32 or 64.
% To compare to a different variant of sparse attention, we report performance of Sliding Window Transformers (SWT) \citep{beltagy2020longformerlongdocumenttransformer, zaheer2020big} with the same parameters.

% Even though T-PSM maintains a smaller state of size $O(c \cdot \log (n/c))$ compared to SWT's $O(n)$, they require a smaller chunk size $c$ to solve the task, as can be seen in \Cref{fig:mqar}.

\Cref{fig:mqar} shows the results. 
Here all the evaluation lengths are in the training distribution. We find that T-PSM with a chunk size of 64 achieves the perfect accuracy like the full context transformer, while reducing its chunk size to 32 yields performance degradation on a long length (512).
Mamba fails in our setting; unlike in prior work \citep{arora2024simple, okpekpe2025revisiting}, our setting is harder due to our uniform query sampling as discussed above.

\subsection{Language Modeling on WikiText-103 with Transformer-PSM}
\label{exp:lm}
% Following prior works~\citep{gu2023mamba,yang2024gated}, 
Here we evaluate perplexity on the WikiText-103 dataset \citep{merity2016pointer}. We benchmark Transformer-PSM $(d=768,H=12,L=1)$ $\mathsf{Agg}$, $(d=768,H=12,L=11)$ $\mathsf{Inf}$, by varying the self-attention chunk size from 32 to 256 tokens and measuring test perplexity against the vanilla GPT-2 (base) baseline with a context size of 512 trained from scratch. As shown in~\Cref{fig:wikitexT_Chunk}, as the chunk size grows, perplexity falls gracefully from 24.12 at 32 tokens to 22.45 at 256 tokens---closely approaching vanilla GPT-2’s perplexity of 22.28---demonstrating that larger chunks recover nearly full-context modeling power while preserving our model’s linear-time inference. 
For reference, we also include a baseline for Mamba trained from scratch at $130m$ parameters, $768$ hidden dimension, trained for $10$ epochs with the same optimizer hyperparameters achieving a ppl of $24.7$.   

\begin{figure}[t]
\centering
\begin{minipage}{0.48\textwidth}
\centering
\includegraphics[width=0.9\textwidth]{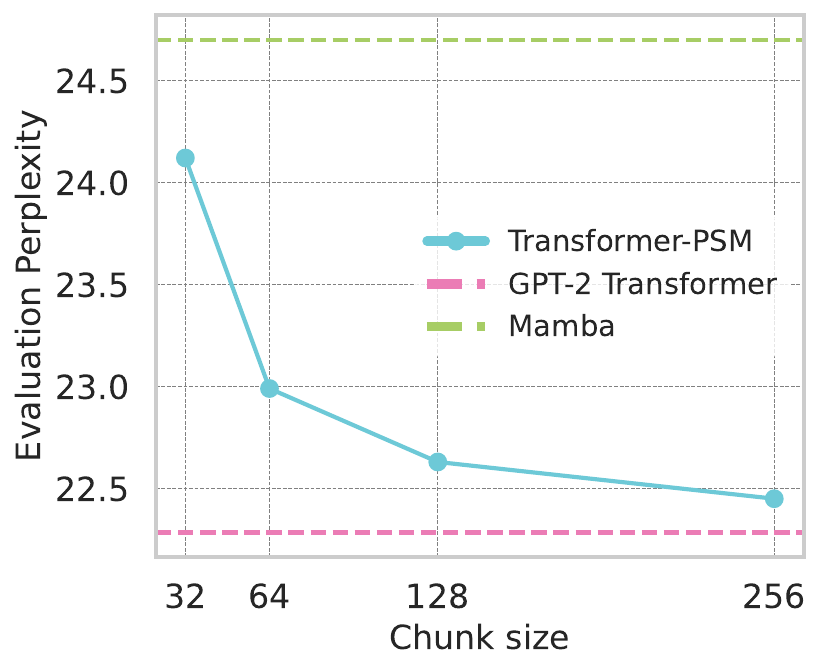}
\vspace{-3mm}
\caption{Evaluation perplexity of Transformer-PSM with varying chunk sizes on WikiText-103}
\label{fig:wikitexT_Chunk}
\end{minipage}
\hfill
\begin{minipage}{0.48\textwidth}
\centering
\includegraphics[width=0.9\textwidth]{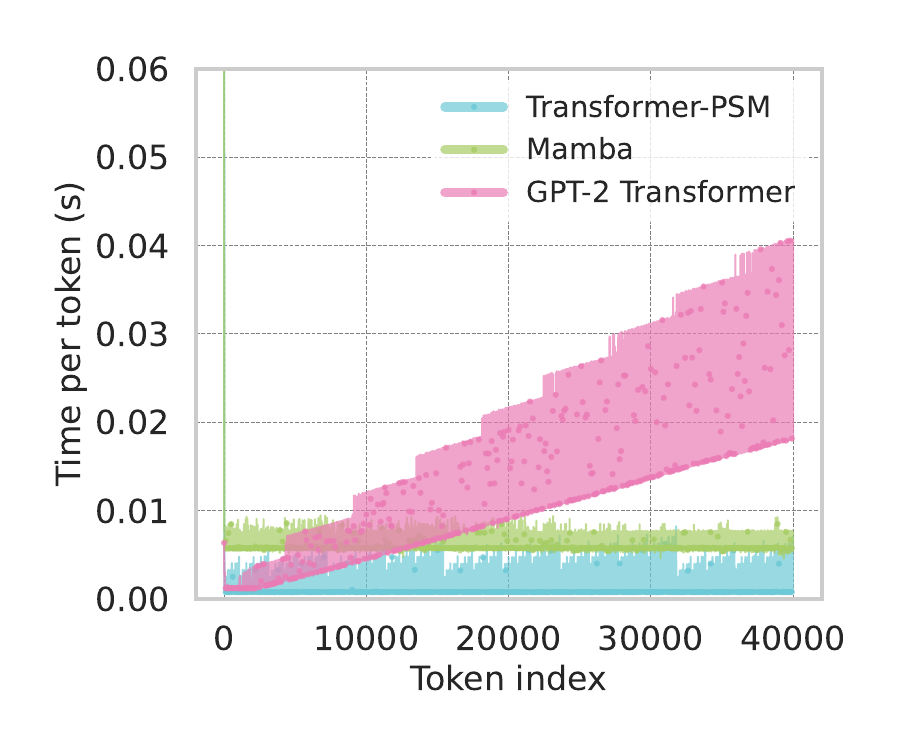}
\vspace{-3mm}
\caption{Inference time per token for Transformer-PSM and GPT-2 Transformer}
\label{fig:wikitext103_inference_speed}
\end{minipage}
\vspace{-2mm}
\end{figure}

Next, we measure per-token latency over 40,000 WikiText-2 tokens for our model versus a 4-layer, 4-head, 256-dimensional GPT-2 baseline. We train Transformer-PSM $(d=768,H=4,L=2)$ $\mathsf{Agg}$, $(d=768,H=4,L=2)$ $\mathsf{Inf}$, thus keeping the parameter count identical to the baseline. As shown in~\Cref{fig:wikitext103_inference_speed}, GPT-2’s inference cost grows linearly with context length ($O(n)$ per token) with KV cache, inflating latency from $\approx 0.002s$ at the start to $\approx 0.04s$ by token 40,000. In contrast, our Transformer-PSM design reuses 64-token chunk summaries, leading to a $O\bigl(2n + \tfrac{n}{32}\log(n/64)\bigr)$ inference cost (as discussed in~\Cref{eq:infer-cost} in~\Cref{app:complexity_psm}), keeping per-token time below $\approx 0.008$s.  For reference, we also include inference time measurement for a Mamba model with $4$ layers, $256$ hidden dimension, with an average inference time per token of $\approx .006$s.

% \subsection{Discussion}
% \label{sec:discussion}
% Our work provides a unique conceptual characterization of existing parallelisable inference-efficient sequence models (and beyond) through the concept of ``prefix scannability''.
% Such a unification seems useful in the current era of sequence modeling, where  many closely related models are developed under different names.
% Moreover, our work deepens the connection between the prefix scan algorithm and efficient sequence models, reinforcing the connection highlighted in prior work \citep{MartinC18}, and building on this view to extend the model design space.

% This algorithmic view of sequence models may provide a framework for, and novel perspectives and insights into designs and development of future sequence models. For example, concurrent work by \cite{guo2025loglinearattention} on ``log linear attention'' is also in line with this view, as they develop a linear attention-style 
% mechanism with log($n$) memory that can be applied to linear attention style mechanisms 
% with structured memory and an efficient chunkwise-parallel primitive.  

%kazuki TODO: more ideas for discussion:
% - Shall we mention that a PSM model essentially "simulates" one layer of modern RNNs?
% we could potentially stack them, i.e., multiple layers of logN shared sub-layers.
% - Another discussion that Morris mentioned on Slack is the larger chunk size case
\vspace{-1mm}
\section{Discussion and Conclusion}
\vspace{-1mm}
\textbf{Discussion.} We give a concise conceptual view of parallelisable, inference-efficient sequence models via \emph{prefix scannability}, unifying many closely related models developed under different names. Our results deepen the link between prefix-scan algorithms and efficient sequence models, extending the design space beyond prior work \citep{MartinC18}.

This algorithmic lens offers a framework for analysing and designing future models. For example, concurrent work on ``log linear attention'' \citep{guo2025loglinearattention} also fits this view, proposing a linear-attention mechanism with $\log n$ memory, structured state, and an efficient chunkwise-parallel primitive.

\textbf{Conclusion.} We formalise \emph{sequential–parallel duality}: models that train in parallel yet decode sequentially. Recent efficient sequence models exhibit this duality and achieve linear-time inference. We characterise them as instances of the classic parallel prefix-scan with a model-specific operator, motivating and analysing the broader class of \emph{parallel scannable models} (PSMs).
% 
% \textbf{Conclusion.} We introduce \textit{sequential-parallel duality}, a fundamental property shared by a broad class of sequence models that support parallel training and sequential inference. All of the recently proposed efficient sequence models belong to this class, with an additional property that their inference time is linear.
% We further propose a formal characterization of these inference-efficient, sequential-parallel models by demonstrating that they are all computable by using the classic parallel prefix scan algorithm with a model-specific associative aggregation operator.
% This motivate us to formally define and analyze
% the class of such ``\textit{parallel scannable models}'' (PSMs).
In particular, we go beyond existing examples of PSMs by defining and empirically studying a novel sequence model based on \textit{non-associative} aggregators. Our experiments suggest that such model may have benefits in length generalization for some tasks, and opens avenues of exploring this design space in light of specific applications.  
Overall, this provides an insightful unification of efficient sequence models, that cannot be found in any prior work.

\section*{Acknowledgements}
This project was partially supported by NSF award CCF-2112665, the MIT Quest for Intelligence, and the Alexander von Humboldt Foundation. S.G. is supported by the MathWorks Engineering Fellowship.

% \sj{if we need space, we could merge discussion and conclusion as one section named exactly that.\\
% references: some have links, some don't. Could just remove all links?}
%kazuki: fixed!

% \subsubsection*{Ethics Statement}
% We have considered the ethical implications of our work and do not foresee any issues.

\newpage
\subsubsection*{Reproducibility Statement}

We provide proofs in~\Cref{app:proofs} and relevant background for all theoretical results in~\Cref{sec:seq_models} and~\Cref{sec:preliminaries}. For experiments, we detail the training protocols in~\Cref{sec:exp}, and algorithm implementations in~\Cref{sec:tpsm}. All datasets are publicly available, and we follow established preprocessing procedures. We release our code with instructions for reproducing the experiments in the paper at https://github.com/muherng/hidden

% \section*{Acknowledgements}
% This project was partially supported by NSF award CCF-2112665, the MIT Quest for Intelligence, and the Alexander von Humboldt Foundation. 

\bibliographystyle{iclr2026_conference}

\newpage
\appendix
\startcontents[appendices]
\printcontents[appendices]{}{1}{\section*{Appendix}}
\newpage

\section{Preliminaries and Depth Conventions}
\label{sec:preliminaries}

\textbf{Depth conventions.}
Throughout this paper we restrict attention to \emph{causal sequence
models whose training and inference graphs can be executed, in the
Random–Access Machine (RAM) model with unbounded fan‑out gates, at
depth (measured by the longest path of synchronous operations)}
%\footnote{Depth is measured by the longest path of synchronous operations.}
\[
  O\!\bigl(\mathrm{polylog}\,n\bigr),
  \quad\text{abbreviated }\tilde O(1).
\]
Whether the hidden polylog factor is $\log n$ or $(\log n)^2$ hinges on the
chosen primitive set—for instance, treating GEMM and \texttt{softmax} as
unit‑time kernels versus expanding them into arithmetic gates.  Classical
fully recurrent networks such as LSTMs \citep{hochreiter1997long} and GRUs \citep{cholearning}, whose forward pass has depth
$\Theta(n)$ and therefore admits \emph{no} sub‑linear parallel schedule, fall
outside this scope.\looseness=-1

Our focus is on \emph{polynomial separations} between the principal model
families: standard Transformers, prefix-scannable models (as we will define in~\Cref{sec:psm}), and linear recurrent
RNNs.  Unless a logarithmic factor is essential to the argument, we suppress
it with the tilde notation.  To characterize our example models, we need to specify how their depth is counted, i.e., 
%To characterize our example models, Next, we need to detail how “depth’’ is counted for
%the Transformer.
%The corresponding question 
in the model of computation: {How tall is their training circuit?}\looseness=-1

\textbf{Transformer.}
      A single self–attention head executes the composite map
      \(
        (\mQ,\mK,\mV)\;\mapsto\;
\operatorname{softmax}\bigl(\mQ \mK^{\top}/\sqrt{d}\bigr)\mV
      \), where $d$ denotes the key/query head dimension. 
      In a Random–Access Machine with unbounded fan‑out gates, the
      \emph{pointwise} linear projections $\vx\mW$ have depth~$1$, but
      the $n\times n$ \emph{matrix multiply} $\mQ\mK^{\!\top}$ and
      every row‑wise \emph{softmax} (vector sum + normalisation)
      require a parallel reduction of $n$ numbers.
      Using a binary tree this costs $\Theta(\log n)$
      depth.\footnote{%
        There are sub‑logarithmic circuits for exact
        matrix multiply~\citep[e.g.][]{valiant1975}, but they are very wide and
        rarely exploited in ML practice; $\log n$ therefore matches realistic
        GPU / TPU kernels.}
      Hence, an $L$–layer Transformer has depth
      \(
          D(n)=\Theta(L\log n).
      \)
      If one \emph{treats the GEMM and softmax kernels as unit‑time
      primitives}, this is often reported as “constant depth,” but strictly
      speaking it is $\tilde O(1)$ (polylogarithmic).\looseness=-1

    \textbf{Mamba, Gated Linear Attention, RWKV.}
      The expensive step is a parallel scan that produces the running
      state.  Its depth is
      $\Theta(\log n)$, and the pointwise gating around it adds
      $O(1)$.  Stacking $L_{\mathsf{agg}}$ such layers gives
      $
D(n)=\Theta\!\bigl(L_{\mathsf{agg}}\log n\bigr).
      $

\section{Additional Proofs}
\label{app:proofs}

\subsection{Modern RNN Layers Fit One Affine Scan}
\label{sec:affine-unification}
% ================================================================

To relate PSMs to recent models, this section shows that a broad family of recent fast‑inference layers
(\autoref{tab:affine-zoo}) are \emph{all} specializations of a single {affine
state‑update} template, i.e. their state kernel can be expressed as an affine bilinear function. This enables \(\textsf{SPD‑}(n,1)\) complexity. 

%kazuki: Define R (and M much earlier)
\begin{definition}[Affine recurrence]\label{def:affine-rec}
Let \((\mathcal M,+,0)\) be an additive group and
\(\blacktriangleright:R\times \mathcal M\to \mathcal M\) a fixed bilinear action of a monoid
\((R,\circ,I)\) on \(\mathcal M\).
A layer is said to have an \emph{affine state update} if its hidden state obeys
\begin{equation}
  \vs_t \;=\; \mE_t \blacktriangleright \vs_{t-1} \;+\; \vf_t,
  \quad
  \vs_{-1}=0,
  \label{eq:affine-main}
\end{equation}
where \((\mE_t,\vf_t)\in R\times \mathcal M\) are (learnable) functions of the current
chunk $x_t$.  That is $\mE_t \coloneqq \mE_{\theta}(\vx_t)$ and $\vf_t \coloneqq \vf_{\theta'}(\vx_t)$ for learnable functions $\mE_\theta$ and $\vf_{\theta'}$.   
\end{definition}

% \subsection{One associative operator}
The models in~\Cref{tab:affine-zoo} all satisfy this affine state update template and all share the following aggregator, which is associative.
\begin{lemma}[Affine aggregator](One associative operator, Affine aggregator) \label{lem:assoc}
Define for \((\mE_i,\vf_i)\in R\times \mathcal M\)
\[
   (\mE_2,\vf_2)\oplus(\mE_1,\vf_1)
  \;=\;
  \bigl(\mE_2\!\circ \mE_1,\;\vf_2 + \mE_2\blacktriangleright \vf_1\bigr),
  \quad
  \ve=(I,0).
\]
Then \((R\times \mathcal M,\oplus,\ve)\) is a monoid—$\oplus$ is associative with
identity $\ve$—and
\[
  (\mE_t,\vf_t)\oplus\cdots\oplus(\mE_0,\vf_0)
  \;=\;
  \bigl(\bar \mE_t,\;\vs_t\bigr),
\]
where \(\vs_t\) is the state given by~\Cref{eq:affine-main} and $\bar \mE_t$ is an auxiliary variable.  
\end{lemma}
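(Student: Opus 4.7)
The plan has two parts, matching the two claims of the lemma: (i) verify that $(R\times\mathcal M,\oplus,\ve)$ is a monoid, and (ii) show by induction that the iterated $\oplus$-product recovers the affine recurrence in its second coordinate.

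For the monoid part, I would first verify the identity laws. The left identity $\ve \oplus (\mE,\vf) = (I\circ\mE,\; 0 + I\blacktriangleright \vf) = (\mE,\vf)$ uses that $I$ is the unit of $(R,\circ)$ together with the action axiom $I\blacktriangleright \vf = \vf$. The right identity $(\mE,\vf)\oplus \ve = (\mE\circ I,\; \vf + \mE\blacktriangleright 0) = (\mE,\vf)$ uses the unit law in $R$ and the fact that bilinearity of $\blacktriangleright$ forces $\mE\blacktriangleright 0 = 0$. For associativity I would expand both sides of $((\mE_3,\vf_3)\oplus(\mE_2,\vf_2))\oplus(\mE_1,\vf_1)$ and $(\mE_3,\vf_3)\oplus((\mE_2,\vf_2)\oplus(\mE_1,\vf_1))$. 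The first coordinates agree because $\circ$ is associative (monoid axiom in $R$). The second coordinates both equal $\vf_3 + \mE_3\blacktriangleright \vf_2 + (\mE_3\circ \mE_2)\blacktriangleright \vf_1$, where the left expansion is immediate and the right one follows from two facts: distributivity of the action over $+$, i.e.\ $\mE_3\blacktriangleright(\vf_2 + \mE_2\blacktriangleright \vf_1) = \mE_3\blacktriangleright \vf_2 + \mE_3\blacktriangleright(\mE_2\blacktriangleright \vf_1)$ (bilinearity), and the action-composition law $\mE_3\blacktriangleright(\mE_2\blacktriangleright \vf_1) = (\mE_3\circ \mE_2)\blacktriangleright \vf_1$.

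For the iterated-product claim I would induct on $t$, defining $\bar \mE_t := \mE_t\circ \mE_{t-1}\circ\cdots\circ \mE_0$ so that the first coordinate statement is tautological. The base case $t=0$ reduces to $\vs_0 = \mE_0 \blacktriangleright 0 + \vf_0 = \vf_0$, which matches \eqref{eq:affine-main} together with the initial condition $\vs_{-1}=0$. For the inductive step, assuming $(\mE_{t-1},\vf_{t-1})\oplus\cdots\oplus(\mE_0,\vf_0) = (\bar \mE_{t-1},\vs_{t-1})$, I would left-multiply by $(\mE_t,\vf_t)$:
\[
(\mE_t,\vf_t)\oplus(\bar \mE_{t-1},\vs_{t-1})
= \bigl(\mE_t\circ \bar \mE_{t-1},\; \vf_t + \mE_t\blacktriangleright \vs_{t-1}\bigr)
= (\bar \mE_t,\vs_t),
\]
where the last equality is exactly the defining recurrence~\eqref{eq:affine-main}.

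I do not foresee a genuine obstacle: once the action-composition and bilinearity axioms are invoked explicitly, everything reduces to careful bookkeeping. The one point worth flagging is the convention on the order of composition. Because the aggregator in the lemma is written with $\mE_2$ acting on $\vf_1$ (i.e.\ ``left'' accumulates onto ``right''), associativity crucially requires $(\mE_3\circ \mE_2)\blacktriangleright \vf_1 = \mE_3\blacktriangleright(\mE_2\blacktriangleright \vf_1)$ rather than the opposite order; I would state this action axiom up front as part of what it means for $\blacktriangleright$ to be a (left) monoid action, so the associativity calculation goes through without ambiguity.
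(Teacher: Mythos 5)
Your proof is correct and follows essentially the same route the paper intends: a direct verification of the identity and associativity laws from the action axioms (unit law, linearity in the second argument, and the composition law $(\mE_3\circ\mE_2)\blacktriangleright\vf=\mE_3\blacktriangleright(\mE_2\blacktriangleright\vf)$), followed by induction on $t$ to identify the second coordinate of the iterated product with the recurrence state. The only difference is that you carry out the ``straightforward verification'' in full generality, whereas the paper's appendix spells out the same calculation only for concrete instantiations (LTI systems and GLA), so your argument is, if anything, a more complete record of the claimed general statement.
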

\begin{proof}
Straightforward verification using the action axioms; full details in~\Cref{app:proofs}.
\end{proof}
\vspace{-2mm}
% \subsection{Theorem: all affine layers are \texorpdfstring{$\textsf{SPD‑}(n,1,1)$}{SPD‑(n,1,1)}}
Once written in that form, their binary operator is associative, hence each layer is a Prefix–Scannable Model with \(\textsf{SPD‑}(n,1)\) complexity.
%Finally, the following theorem states that all affine layers are \texorpdfstring{$\textsf{SPD‑}(n,1,1)$}{SPD‑(n,1,1)}.

\begin{theorem}\label{thm:affine-spd}
Every layer that satisfies~\Cref{def:affine-rec} is a
Prefix–Scannable Model with chunk size $c=1$, encoder  $\mathsf{Enc}$, aggregator $\mathsf{Agg}$, and inference module $\mathsf{Inf}$ defined as 
\[\vx_i=\mathsf{Enc}(\mC_i) = (\mE_i, \vf_i)\] 
for $i=0,\dots,r-1$
with the aggregator of~\Cref{lem:assoc}.  
\[\mathsf{Agg}(\vx_2, \vx_1) \coloneqq (\mE_2,\vf_2)\oplus(\mE_1,\vf_1)
\]
We define $\mathsf{Inf}(\vs_{i-1}, \mC_i)$ to be the function that takes states and current token and outputs predictions.  Hence these models
admits training work \(\Theta(n)\), parallel depth \(\Theta(\log n)\), and
online inference cost \(O(1)\) time  and \(O(1)\) memory per token:
%\[
  %\boxed{\text{
  the layer is in {\normalfont $\textsf{SPD‑}(n,1)$}.%}
%\]
\end{theorem}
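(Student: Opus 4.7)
The plan is to verify the two PSM conditions (associativity of the aggregator and small state size) and then read off the complexity bounds. First, I would confirm that the triple $(\mathsf{Enc},\mathsf{Agg},\mathsf{Inf})$ in the statement actually fits Definition~\ref{def:psm}: $\mathsf{Enc}$ applied to the length-$1$ chunk $\mC_i=(\va_i)$ just computes the pair $(\mE_i,\vf_i)=(\mE_\theta(\va_i),\vf_{\theta'}(\va_i))$, which by hypothesis is a depth-$\tilde O(1)$ map of $\va_i$; $\mathsf{Agg}$ is the operator $\oplus$ from Lemma~\ref{lem:assoc}, whose evaluation is one monoid action $\blacktriangleright$ plus one group addition and one monoid multiplication $\circ$, all of depth $\tilde O(1)$; and $\mathsf{Inf}_\phi$ is by assumption a chunk-local depth-$\tilde O(1)$ head.

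Next I would invoke Lemma~\ref{lem:assoc} directly. Since $(R\times\mathcal M,\oplus,\ve)$ is a monoid, the Blelloch scan over $\{(\mE_i,\vf_i)\}_{i=0}^{n-1}$ returns, in its second coordinate, exactly the sequence $\vs_t$ defined by the affine recurrence~\eqref{eq:affine-main}; the first coordinate carries the auxiliary product $\bar\mE_t$ that makes the operator associative. This closes the correctness half of the claim. For the training complexity, I would then cite Proposition~\ref{prop:psm-spd}: the scan over $n$ constant-size elements runs in $\Theta(n)$ total work and $\Theta(\log n)$ parallel depth, and composing it with the depth-$\tilde O(1)$ chunk-local head preserves both bounds.

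The nontrivial part is the inference memory bound of $O(1)$ rather than the generic PSM bound of $O(\log n)$. Here I would argue directly from associativity: at inference time we never need the online scan of Algorithm~\ref{alg:online}, because the associativity of $\oplus$ means that a single running accumulator $(\bar\mE_{t-1},\vs_{t-1})\in R\times\mathcal M$ is a sufficient statistic for all future updates. Concretely, upon receiving $\va_t$ we compute $(\mE_t,\vf_t)$, update $\vs_t=\mE_t\blacktriangleright \vs_{t-1}+\vf_t$ and $\bar\mE_t=\mE_t\circ\bar\mE_{t-1}$ in $O(1)$ time, and emit $\hat\vy_t=\mathsf{Inf}_\phi(\vs_{t-1},\va_t)$; the entire working memory is the pair $(\bar\mE_{t-1},\vs_{t-1})$, which has size independent of $n$.

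The main obstacle, and the only place where care is required, is precisely the last paragraph: one must be explicit that associativity of $\oplus$ is what collapses the $O(\log n)$ partial-result stack of the generic online Blelloch scan down to a single constant-size accumulator, and that the representation sizes of $\mE_t\in R$ and $\vf_t\in\mathcal M$ are independent of the sequence length (which is the standing assumption for every row of Table~\ref{tab:affine-zoo}: scalars, diagonals, or rank-one outer products). Once these points are made, the conclusion $\textsf{SPD‑}(n,1)$ is immediate from the definitions.
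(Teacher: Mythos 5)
Your proposal is correct and follows essentially the same route as the paper: apply the static Blelloch scan with the associative operator $\oplus$ of Lemma~\ref{lem:assoc} for $\Theta(n)$-work, $\Theta(\log n)$-depth training, and exploit associativity at inference so that a single constant-size running accumulator (rather than the $O(\log n)$ stack of the generic online scan) suffices, giving $O(1)$ time and memory per token. Your added care about the representation sizes of $\mE_t$ and $\vf_t$ being independent of $n$ is a useful explicit remark that the paper leaves implicit, but it does not change the argument.
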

\begin{proof}
Apply the static Blelloch scan to the pairs \((\mE_i,\vf_i)\) using $\oplus$
to obtain every prefix in \(O(n)\) work and \(O(\log n)\) depth.~\Cref{lem:assoc} ensures the scan outputs the correct state \(\vs_t\),
which the inference head may consume chunk‑wise.
During streaming inference, the online left to right scan maintains the same
prefixes with constant work and constant additional memory because $\oplus$ is
associative.\looseness=-1
\end{proof}

% \subsection{Catalogue of affine layers}
\Cref{tab:affine-zoo} shows a catalogue of affine layers. Note that the affine form absorbs normalisation variables common in linear Transformers (e.g.\ running scalars/vectors
\(z_t\); typically running sum of keys \citep{katharopoulos2020transformers} or related variables \citep{beck2024xlstm}) by enlarging the state vector and treating the auxiliary
variable as just another coordinate updated affinely.
The proof of~\Cref{thm:affine-spd} requires no change.

% \textbf{Remark.}  
% The affine form absorbs normalisation variables common in linear Transformers (e.g.\ running scalars/vectors
% \(z_t\); typically running sum of keys \citep{katharopoulos2020transformers} or related variables \citep{beck2024xlstm}) by enlarging the state vector and treating the auxiliary
% variable as just another coordinate updated affinely.
% The proof of Theorem~\ref{thm:affine-spd} requires no change.

% \bigskip

% \textbf{Road map.} So far every example of models relies on an \emph{associative} aggregator.~\Cref{sec:prefix-scan} equips us with the binary‑counter scan, which removes that restriction; the remainder of the paper explores new (non‑associative) aggregators and their empirical behaviour.

\subsection{Examples of Prefix-Scannable Sequence Models}
In the following, we present two families of models whose parallel circuits can be obtained as the computation of a Blelloch parallel scan. 
In fact, it suffices to show that for all family of architectures that are affine in their state, there exists an associative operator $\oplus$ that defines a monoid over which the Blelloch parallel scan operates.  

One type of prefix-scannable models are \textbf{linear time invariant dynamical systems}.

\begin{definition}[LTI Linear Dynamical System] A linear time invariant system is defined by four matrices $(\mA,\mB,\mC,\mD) \in \mathbb{R}^{d \times d}$ defining  
    \begin{align}
        \vs_{t+1} = \mA\vs_t + \mB\vx_t\\
        \vy_t = \mC\vs_t + \mD\vx_t
    \end{align}
Here $\vs_0 = 0$ is the initial state, and $\vs_t$ is the state at time $t \in \mathbb{Z}^+$.  $\vx_t \in \mathbb{R}^d$ is the input at time $t$.   
\end{definition}

\begin{definition}[Associative Operator for Affine State Monoid] For each timestep, let $\vg_t$ be an augmented pair $\vg_t \coloneq (\mE_t,\vf_t) \coloneq (\mA, \mB\vx_t)$ where $\mE_t \in \bbR^{d \times d}$ is a matrix and $\vf_t \in \bbR^d$ is a vector. We define an associative operator $\oplus$ as  
\begin{equation}
(\mE_2,\vf_2) \oplus (\mE_1, \vf_1) = (\mE_2\mE_1, \vf_2 + \mE_2 \vf_1)
\end{equation}
\end{definition}

% \begin{proposition} (Prefix Scannable Sequence Models)
% A sequence model is Prefix Scannable, that is, its outputs can be computed by the Blelloch parallel scan, if (1) the operator $\oplus$ applied to all the $g_i$ over all timesteps computes the state, and (2) $\oplus$ is associative.
% \end{proposition}

To demonstrate that a sequence model is Prefix Scannable,
we must verify two properties.  Firstly, that the operator $\oplus$ applied to all the $g_i$ over all timesteps computes the state.  Secondly that, $\oplus$ is associative. 
    
\begin{lemma}
Let $\mG_t$ be the augmented pair equal to $\oplus$ applied to the sequence of augmented pairs $\vg_1,...,\vg_T$.  Then  
\begin{equation}
\mG_t = (\mE_t,\vf_t) \oplus ... \oplus (\mE_1,\vf_1) = (\mA^{t}, \sum_{k=0}^{t-1} \mA^{t-1-k} \mB\vx_k)
\end{equation}
Secondly for any inputs $\vg_i, \vg_j, \vg_k$ we have $(\vg_i \oplus \vg_j) \oplus \vg_k = \vg_i \oplus (\vg_j \oplus \vg_k)$
\end{lemma}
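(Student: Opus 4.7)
The plan is to prove the two claims independently: the explicit prefix formula by induction on $t$, and the associativity by direct symbolic expansion of both sides.

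For the first claim, I would proceed by induction on $t$. In the base case $t=1$, a single augmented pair $(\mE_1,\vf_1)=(\mA,\mB\vx_0)$ already matches the right‑hand side since $\mA^1 = \mA$ and the sum $\sum_{k=0}^{0}\mA^{-k}\mB\vx_k$ collapses to $\mB\vx_0$. For the inductive step, I would assume $\mG_{t-1}=\bigl(\mA^{t-1},\,\sum_{k=0}^{t-2}\mA^{t-2-k}\mB\vx_k\bigr)$, apply the definition of $\oplus$ to obtain
\[
  \mG_t = (\mE_t,\vf_t)\oplus \mG_{t-1}
        = \bigl(\mA\cdot\mA^{t-1},\;\mB\vx_{t-1}+\mA\textstyle\sum_{k=0}^{t-2}\mA^{t-2-k}\mB\vx_k\bigr),
\]
and then pull the $\mA$ inside the sum, re‑indexing so the exponents $t-2-k$ become $t-1-k$ and the $(t-1)$‑th term $\mB\vx_{t-1}$ is absorbed as the new $k=t-1$ summand. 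This gives exactly $\bigl(\mA^t,\sum_{k=0}^{t-1}\mA^{t-1-k}\mB\vx_k\bigr)$, completing the induction. The only care point is ensuring the right‑to‑left fold convention of $\oplus$ lines up with the direction of composition in the stated formula.

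For the second claim, I would simply expand $(\vg_i\oplus\vg_j)\oplus\vg_k$ and $\vg_i\oplus(\vg_j\oplus\vg_k)$ using the definition, yielding respectively
\[
  \bigl(\mE_i\mE_j\mE_k,\;\vf_i + \mE_i\vf_j + (\mE_i\mE_j)\vf_k\bigr)
  \quad\text{and}\quad
  \bigl(\mE_i(\mE_j\mE_k),\;\vf_i + \mE_i(\vf_j + \mE_j\vf_k)\bigr),
\]
and then invoke associativity of matrix multiplication on the first component and linearity of matrix–vector multiplication (so $\mE_i(\mE_j\vf_k)=(\mE_i\mE_j)\vf_k$ and $\mE_i(\vf_j+\mE_j\vf_k)=\mE_i\vf_j+\mE_i\mE_j\vf_k$) on the second. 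Identifying the two expressions closes the argument.

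Neither step presents a genuine obstacle: both reduce to careful symbol pushing. The only subtlety worth flagging is the bookkeeping of the summation index in the inductive step, where one must ensure the off‑by‑one shift between $\mA^{t-2-k}$ and $\mA^{t-1-k}$ is handled correctly when the leading $\mA$ is distributed through the sum and the new boundary term $\mB\vx_{t-1}$ is incorporated. Once that is in hand, associativity is a routine consequence of the ambient ring/module axioms, which is exactly the structural reason that this aggregator is a valid instance of the general associative operator in \Cref{lem:assoc}.
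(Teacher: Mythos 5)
Your proof is correct and follows essentially the same route as the paper's: induction on $t$ for the explicit prefix formula and direct symbolic expansion of both groupings for associativity. The only difference is cosmetic—you verify associativity for arbitrary pairs $(\mE,\vf)$ rather than plugging in the concrete LTI pairs $(\mA,\mB\vx_i)$ as the paper does, which is if anything slightly cleaner since the scan also combines intermediate aggregates not of that special form.
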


\begin{proof}
Proof by induction for the equality and straightforward computation for associativity.

We have base case.
\begin{equation}
(\mE_2,\vf_2)\oplus (\mE_1,\vf_1) = (\mA^2, \mB\vx_2 + \mA\mB\vx_1)
\end{equation}
Apply definitions to see this is true for general $t$.  

Proof of associativity.  

$\vg_1, \vg_2, \vg_3$ we have $(\vg_3 \oplus \vg_2) \oplus \vg_1 = \vg_3 \oplus (\vg_2 \oplus \vg_1)$

\begin{align}
\vg_3 \oplus (\vg_2 \oplus \vg_1) = (\mA,\mB\vx_3) \oplus (\mA^2, \mB\vx_2 + \mA\mB\vx_1)  = (\mA^3, \mB\vx_3 + \mA\mB\vx_2 + \mA^2\mB\vx_1)\\
=  (\mA^2, \mB\vx_3 + \mA\mB\vx_2) \oplus (\mA,\mB\vx_1) = (\mA,\mB\vx_3) \oplus (\mA,\mB\vx_2) \oplus (\mA,\mB\vx_1)\\
= (\vg_3 \oplus \vg_2) \oplus \vg_1
\end{align}
\end{proof}

Another type of prefix-scannable models are  \textbf{linear transformers} and their gated variants.

% \subsection{Linear Attention and Gating}
% Gated Linear Attention  

\begin{definition} Gated Linear Attention (GLA) is defined with a states $\vs_1,...,\vs_T \in \bbR^{p \times d}$, inputs $\vx_1,...,\vx_T$, gating function $r: \bbR^d \rightarrow \bbR$, keys $\vk_1,...,\vk_T$, kernel map $\phi: \bbR^{d} \rightarrow \bbR^{p}$  
\begin{align}
\vs_{t} = r(\vx_t) \odot \vs_{t-1} + \phi(\vk_t)\vv_t^T\\
\end{align}
\end{definition}

We observe that GLA has an affine state recurrence.  

\begin{lemma}  Let $\mE_t \in \bbR$ be a scalar that can be computed from $\vx_t$.  Let $\vf_t \in \bbR^{p \times d}$ be a matrix that can be computed from $\vx_t$.  Then the GLA state recurrence is an affine function of the form  
\begin{align}
\vs_{t} = \mE_t \vs_{t-1} + \vf_t\\
\end{align}
In particular, let $\vg_t = (\mE_t,\vf_t)$ be an augmented pair, and let $\oplus$ be an operator defined as 
\begin{align}
(\mE_2,\vf_2) \oplus (\mE_1, \vf_1) = (\mE_2\mE_1, \vf_2 + \mE_2 \vf_1)
\end{align}
Then $\oplus$ is associative, and $\vs_t = \vg_t \oplus ... \oplus \vg_1$.    
\end{lemma}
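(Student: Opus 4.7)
The plan is to reduce this lemma directly to the general affine-recurrence framework already set up in Definition~\ref{def:affine-rec} and Lemma~\ref{lem:assoc}. Since the GLA update is literally displayed in affine form, the only real work is to package the ingredients as an affine recurrence in the sense of that definition and then quote the general associativity result; no new combinatorial argument is needed.

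First I would identify the affine form. From the GLA recurrence $\vs_{t} = r(\vx_t) \odot \vs_{t-1} + \phi(\vk_t)\vv_t^\top$, set $\mE_t = r(\vx_t) \in \bbR$ and $\vf_t = \phi(\vk_t)\vv_t^\top \in \bbR^{p \times d}$. Because $r(\vx_t)$ is scalar, $r(\vx_t) \odot \vs_{t-1}$ is ordinary scalar multiplication of the matrix $\vs_{t-1}$, and so the recurrence reads $\vs_t = \mE_t \vs_{t-1} + \vf_t$ with $\vs_{-1} = 0$, which is exactly the template of Definition~\ref{def:affine-rec} with monoid $(R,\circ,I) = (\bbR,\cdot,1)$ acting on the additive group $(\bbR^{p\times d},+,0)$ by scalar multiplication.

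Next I would verify associativity of $\oplus$. The cleanest route is to appeal to Lemma~\ref{lem:assoc} verbatim, since the required axioms of the bilinear action $\blacktriangleright$ reduce to distributivity of scalar multiplication over matrix addition. For completeness one can also do a three-line direct calculation: expanding both $((\mE_3,\vf_3)\oplus(\mE_2,\vf_2))\oplus(\mE_1,\vf_1)$ and $(\mE_3,\vf_3)\oplus((\mE_2,\vf_2)\oplus(\mE_1,\vf_1))$ yields the common value $(\mE_3\mE_2\mE_1,\;\vf_3 + \mE_3\vf_2 + \mE_3\mE_2\vf_1)$, where associativity in the first coordinate is that of $(\bbR,\cdot)$ and in the second uses scalar-by-matrix bilinearity.

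Finally I would close with a one-line induction showing $\vg_t \oplus \cdots \oplus \vg_1 = \bigl(\prod_{i=1}^{t}\mE_i,\;\vs_t\bigr)$: the base case is $\vg_1 = (\mE_1,\vf_1) = (\mE_1, \mE_1\cdot 0 + \vf_1)$, and the inductive step combines the definition of $\oplus$ with the recurrence $\vs_t = \mE_t \vs_{t-1} + \vf_t$. There is no genuine obstacle here; the only subtlety worth flagging in the write-up is treating $r(\vx_t)$ consistently as an element of the multiplicative monoid $(\bbR,\cdot,1)$ acting bilinearly on matrices, so that Lemma~\ref{lem:assoc} transfers without modification rather than being re-derived by hand.
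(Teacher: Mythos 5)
Your proposal is correct and takes essentially the same route as the paper's proof, which likewise verifies $\vs_t = \vg_t \oplus \cdots \oplus \vg_1$ by induction and checks associativity by expanding both parenthesisations of three elements (written there in the GLA-specific terms $r(\vx_i)$ and $\phi(\vk_i)\vv_i^\top$). One small caution: Lemma~\ref{lem:assoc} defers its ``full details'' to this very appendix, so quoting it verbatim here would be mildly circular—but since you also give the direct expansion to the common value $(\mE_3\mE_2\mE_1,\,\vf_3+\mE_3\vf_2+\mE_3\mE_2\vf_1)$, your argument is self-contained and matches the paper's.
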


\begin{proof}
Proof by induction for the equality and straightforward computation for associativity.

First we prove $\vs_t = \vg_t \oplus ... \oplus \vg_1$ by induction.  Consider the base case.  
\begin{align}
\vg_2 \oplus \vg_1 = (r(\vx_2), \phi(\vk_2)\vv_2^T) \oplus (r(\vx_1), \phi(\vk_1)\vv_1^T)\\
= (r(\vx_2) \odot r(\vx_1), \phi(\vk_2)\vv_2^T + r(\vx_2) \odot \phi(\vk_1)\vv_1^T) \\
= (r(\vx_2) \odot r(\vx_1), \vs_2)
\end{align}
Then assuming the identity holds at timestep $t-1$
\begin{align}
\vg_{t} \oplus (r(\vx_{t-1})\odot ... \odot r(\vx_1), \vs_{t-1}) = \\
(r(\vx_{t})\odot ... \odot r(\vx_1), r(\vx_t) \odot \vs_{t-1} + \phi(\vk_t)\vv_t^T)
\end{align}
as desired.  

Then we also check associativity.  

\begin{align}
\vg_3 \oplus (\vg_2 \oplus \vg_1) = \vg_3 \oplus (r(\vx_2) \odot r(\vx_1), \phi(\vk_2)\vv_2^T + r(\vx_2) \odot \phi(\vk_1)\vv_1^T) \\
= (r(\vx_3) \odot r(\vx_2) \odot r(\vx_1), \phi(\vk_3)\vv_3^T + r(\vx_3) \odot \phi(\vk_2)\vv_2^T + r(\vx_2) \odot r(\vx_2) \odot \phi(\vk_1)\vv_1^T)\\
= (r(\vx_3) \odot r(\vx_2), \phi(\vk_3)\vv_3^T + r(\vx_3) \odot \phi(\vk_2)\vv_2^T) \oplus \vg_1 \\
= (\vg_3 \oplus \vg_2) \oplus \vg_1
\end{align}

\end{proof}

% \begin{remark} 
% Note that our claim is that these models admit a parallel circuit corresponding to a Blelleck prefix scan algorithm for a specific choice of operator, which allows us to conclude that these models satisfy SPD.
% However, we do not claim that the prefix scan is used for their training in practice.
% \end{remark}

% \begin{remark}
% A large family of existing models belong to the above cases. TODO detail this more by referring to each model, and their operators.
% \end{remark}

\section{Analytical Details of Chunking in Prefix Scannable Models}
\label{sec:psm_details}
Here we summarize the analytical details of PSMs (\Cref{def:psm}).
%, before moving on to the experiments.
%
The model state of a PSM after $t$ chunks is the Blelloch prefix defined to be 
\(\vs_t=\mathsf{Agg}_{\theta}^{\mathrm{Blelloch}}(\va_0{:}\va_t)\),
and the model outputs
\(
\hat{\vy}_{t}=\mathsf{Inf}_{\phi}\!\bigl(\vs_{t-1},\,\mC_t\bigr).
\)
The same state sequence \(\{\vs_t\}\) can be produced \emph{online} with
$O(\log t)$ memory by replacing the static scan with the
binary‑counter scan of~\Cref{alg:online}.
The corresponding $\mathcal O(\log t)$-parallel depth training algorithm and 
$O(\log t)$-memory online decoding algorithm can be found in \Cref{alg:psm-train} and \Cref{alg:psm-decode}, respectively.
Note that both parallel loops and the \textsc{StaticBlellochScan} have depth
\(\mathcal O(\log t)\), dominated by the static Blelloch scan,
so the whole training pass admits efficient batch execution.

The model has the following properties:

\textbf{Sequential–parallel duality.}
      ~\Cref{alg:psm-train} and~\Cref{alg:psm-decode}
      produce \emph{identical} state sequences~$\{\vs_t\}$
      (\Cref{thm:equivalence}),
      so a PSM trained with the static scan
      can be evaluated online with logarithmic memory.

\textbf{Model family.}
      Choosing $\mathsf{Agg}_{\theta}$ to be associative recovers known scan‑friendly models as a strict subset of PSMs; non‑associative choices (e.g.\ a Transformer block)
      enlarge the design space while retaining online decodability.

\textbf{Complexities.}
      Training: For sequences of length $n$, chunks of size $c$, we have $\mathcal O(n)$ work, $\mathcal O(\log(n/c))$ depth.
      Online inference: $\mathcal O(c)$ amortised work per token and
      $\mathcal O(c\log(n/c))$ memory after $n/c$ chunks.
% \begin{itemize}
% \item \textbf{Sequential–parallel duality.}\;
%       Algorithms~\ref{alg:psm-train} and~\ref{alg:psm-decode}
%       produce \emph{identical} state sequences~$\{s_t\}$
%       (Theorem~\ref{thm:equivalence}),
%       so a PSM trained with the static scan
%       can be evaluated online with logarithmic memory.
% \item \textbf{Model family.}\;
%       Choosing $\mathsf{Agg}_{\theta}$ to be associative recovers known scan‑friendly models as a strict subset of PSMs; non‑associative choices (e.g.\ a Transformer block)
%       enlarge the design space while retaining online decodability.
% \item \textbf{Complexities.}\;
%       Training: $\mathcal O(L)$ work, $\mathcal O(\log(L/c))$ depth.
%       Online inference: $\mathcal O(1)$ amortised work and
%       $\mathcal O(\log t)$ memory after $t$ chunks.
% \end{itemize}

Further details about the computational complexity are detailed below in~\Cref{app:complexity_psm}.

\section{Computational complexity of PSMs}
\label{app:complexity_psm}

Let

\begin{itemize}
\item $n$      – sequence length,
\item $c$      – chunk size ($n=c\!\cdot\!\textit{num\_chunks}$),
\item $L_{\mathsf{agg}}$ – number of Transformer layers in $\mathsf{Agg}_{\theta}$,
\item $L_{\mathsf{inf}}$ – number of Transformer layers in $\mathsf{Inf}_{\phi}$,
\item $d_{\!s}$ and $d_{\!x}$ – hidden widths of the two modules (held constant).
\end{itemize}

Throughout we count one forward–backward pass as a single ``time unit''
and use the usual dense‑attention cost
$\mathcal O(L\,\ell^{2}d)$ for a length‑$\ell$ Transformer block with $L$ layers.
Only the \emph{scaling with $n,c,L_{\mathsf{agg}},L_{\mathsf{inf}}$} is retained;
constant factors in $d_{\!s},d_{\!x}$ are suppressed.

\vspace{4pt}
\noindent\textbf{Training (static Blelloch scan).}
The three parallel loops of~\Cref{alg:psm-train} give

\[
\boxed{\;
T_{\text{train}}
  = \mathcal O\!\bigl(cnL_{\mathsf{agg}} + cn L_{\mathsf{inf}}\bigr)
,\qquad
S_{\text{train}}
  = \mathcal O\!\Bigl(
      cn\,L_{\mathsf{inf}}
      + cn L_{\mathsf{agg}}
    \Bigr)
.  \;}
\tag{C1}\label{eq:train-cost}
\]

\noindent
\emph{Depth} is $\mathcal O\!\bigl(L_{\mathsf{inf}}+\log(n/c)\,L_{\mathsf{agg}}\bigr)$.
because the static Blelloch scan dominates parallel runtime.  Total nonparallel runtime is linear in sequence length $O(cnL_{\mathsf{agg}} + cn L_{\mathsf{inf}})$.  Additional factor of $c$ comes from $c^2$ dense attention for $n/c$ chunks.  

\vspace{4pt}
\noindent\textbf{Online inference (binary‑counter scan).}
Each new chunk incurs

\begin{enumerate}
\item one $\mathsf{Inf}_{\phi}$ call
      $\;\Rightarrow\;$ cost $\mathcal O(L_{\mathsf{inf}}\,c^2)$, and
\item at most $\log(n/c)$ calls to $\mathsf{Agg}_{\theta}$ per chunk
      $\;\Rightarrow\;$ amortised cost $\mathcal O(L_{\mathsf{agg}})$.
\end{enumerate}

Hence, for the whole length‑$n$ stream, we make $n$ calls to $\mathsf{Inf}$ and $\frac{n}{c} $ calls to $\mathsf{Agg}$.  The space at inference is to store the kv-cache for the $c$ tokens in $\mathsf{Inf}$ and the $\log(n/c)$ chunks of $c$ tokens in $\mathsf{Agg}$ 

\[
\boxed{\;
T_{\text{infer}}
  = \mathcal O\!\Bigl(
      nc\,L_{\mathsf{inf}}
      + nc\,L_{\mathsf{agg}}
    \Bigr)
,\qquad
S_{\text{infer}}
  = \mathcal O\!\Bigl(
      c\,L_{\mathsf{inf}}
      + c \log(n/c)\,L_{\mathsf{agg}}
    \Bigr)
.  \;}
\tag{C2}\label{eq:infer-cost}
\]

\noindent
\textbf{Per‑token latency.}  Dividing \eqref{eq:infer-cost} by $n$ gives

\[
\mathcal O\!\Bigl(
      c\,L_{\mathsf{inf}}
      + c\,L_{\mathsf{agg}}
    \Bigr)
\]

work and $\mathcal O(\log n)$ space, confirming constant‑time amortised
decoding under fixed $c$. 

\vspace{4pt}
\noindent\textbf{Remarks.}
\begin{itemize}
\item When $c=\Theta(1)$ (token‑wise chunks) both training and inference are
      linear in $n$ with \emph{constant} memory for $\mathsf{Inf}_{\phi}$ and
      logarithmic memory for $\mathsf{Agg}_{\theta}$.
\item For larger $c$ the quadratic self‑attention of
      $\mathsf{Inf}_{\phi}$ over each chunk dominates work.
\item If $\mathsf{Agg}_{\theta}$ is associative, we may swap
      the static and online scans without affecting costs; thus
      SSMs and gated linear attention inherit \eqref{eq:train-cost}–
      \eqref{eq:infer-cost} as special cases.
\end{itemize}

% \section{Algorithms and Pseudocode}

\section{Beyond Affine State Recurrence, PSM's with General Aggregation: Further Details}
\label{sec:prefix-scan}

The \emph{prefix–scan} (a.k.a.\ parallel prefix) is fundamental to many
parallel algorithms.  When the binary operator is \emph{associative}, the
classic Blelloch scan~\citep{blelloch1990prefix} computes, in
$\mathcal O(n)$ work and $\mathcal O(\log n)$ depth, the same left–to–right
prefix values that a sequential loop would produce.  This section extends the
view to \emph{non–associative} operators such as those expressible by softmax attention. 

But, there is a challenge with non-associativity: the numerical results of straightforward parallel and sequential versions would differ since parenthesisation differs, challenging our duality. %Yet, they remain well defined because the
%Blelloch algorithm fixes a parenthesisation. 
%
\emph{Parenthesisation} here means the explicit placement of parentheses
that fixes \emph{which two elements are combined first} when evaluating a
long chain of binary operations.  For instance,
\[
  a \,\mathsf{Agg}\, b \,\mathsf{Agg}\, c \,\mathsf{Agg}\, d
  \quad\text{may be grouped as}\quad
  ((a \,\mathsf{Agg}\, b)\,\mathsf{Agg}\, c)\,\mathsf{Agg}\, d
  \;\;\text{or}\;\;
  a \,\mathsf{Agg}\, (b \,\mathsf{Agg}\, (c \,\mathsf{Agg}\, d)),
\]
and when \(\mathsf{Agg}\) is \emph{not} associative the two expressions
generally differ.  The Blelloch algorithm removes this ambiguity by
committing to a single, fixed parenthesisation: the full binary tree
generated by its upsweep and downsweep.  All variants we describe—static
and online—evaluate \emph{exactly that same tree}, guaranteeing identical
results even for non‑associative operators.

We first review the {static tree
formulation}, then present an {online variant} that realises \emph{exactly
the same parenthesisation} while using only $\mathcal O(\log n)$ memory.
Throughout, let
\begin{equation}
  \mathsf{Agg}\;:\;\mathcal M \times \mathcal M \;\to\; \mathcal M,
  \qquad
  \text{identity element } \ve\in\mathcal M,
  \tag{A1}\label{eq:agg}
\end{equation}
be an arbitrary binary operator.  {No associativity assumption is required unless stated.}  

First, we introduce the static Blelloch scan which is a ``parallel'' training over sequence elements.  Then we introduce the online binary counter scan which is the ``sequential'' inference over sequence elements that computes prefixes with the same parenthesisation.  The runtime required to run the static Blelloch scan is $T(n) = O(n)$, whereas the amount of space required during the online binary counter scan is $m(n) = O(\log(n))$.  Taken together this analysis gives us PSMs in SPD-$(n, \log(n))$ i.e linear compute during training and nearly linear space during inference.            

\textbf{Static Blelloch Scan (\Cref{alg:static-blelloch}).}
% \SetKw{KwDownTo}{down to}   % algorithm2e keyword (add once in preamble)
~\Cref{alg:static-blelloch} is agnostic to~\Cref{eq:agg}, that is, it is valid for
\emph{any} operator.  When $\mathsf{Agg}$ is not associative, however, the
output for index $t$ no longer equals the sequential recurrence
$\vs_t=\mathsf{Agg}(\va_t,\vs_{t-1})$.  Instead, it is the unique value obtained by
applying $\mathsf{Agg}$ along the fixed binary–tree parenthesisation imposed by
the algorithm.  The next subsection shows how to realise \emph{the same
parenthesisation} online with logarithmic space.

\textbf{Online Binary–Counter Scan (\Cref{alg:online}).}
The online variant processes the stream $a_0,\dots,a_{n-1}$ left to right while
maintaining a \emph{binary counter} of complete mini–trees.  At time
$t$ (0–indexed) the binary expansion of $t{+}1$ reveals which block sizes
$2^k$ are present.  There is at most one mini–tree (its root value) per block
size, hence at most $\lceil\log_2(t{+}1)\rceil$ roots in memory.  Inserting a
new element is identical to adding $1$ to a binary counter: each trailing
\texttt{1} bit triggers a merge with $\mathsf{Agg}$ and a carry to the next
bit.  Aggregating the occupied roots from most- to least-significant bit
(MSB\,$\to$\,LSB) reproduces the value that the static Blelloch tree would hold
after processing the same prefix.

We obtain the following \textbf{correctness and complexity analysis}.
\begin{proposition}[Block invariant]\label{prop:block-invariant}
After processing $t{+}1$ elements ($t\!\ge\!0$), every non–empty
\texttt{root[$k$]} equals the aggregate of the \emph{most–recent}
$2^{k}$ tokens $\vx_{t-2^{k}+1},\dots,\vx_{t}$, and $(t{+}1)$ is divisible by
$2^{k}$.
\end{proposition}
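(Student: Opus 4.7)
The plan is strong induction on $t$. For the base case $t=0$, processing $\vx_0$ with all slots empty deposits $\mathrm{root}[0] \gets \vx_0$ without entering the \texttt{while} loop; the single non-empty slot satisfies both clauses at $k=0$, since $\vx_0$ is the aggregate of the most recent $2^0=1$ token and $(t{+}1)=1$ is divisible by $2^0$.

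For the inductive step, fix $t\ge 1$ and let $v = v_2(t{+}1)$ denote the $2$-adic valuation of $t{+}1$, equivalently the number of trailing $1$'s in the binary expansion of $t$. I would first establish the counter correspondence: by the inductive hypothesis at time $t-1$, the non-empty slots there correspond to the set bits of $t$; in particular, slots $0,1,\dots,v-1$ are non-empty and slot $v$ is empty at time $t-1$. Consequently the \texttt{while} loop fires exactly $v$ times, clearing $\mathrm{root}[0],\dots,\mathrm{root}[v-1]$ while accumulating their contents together with $\vx_t$ into the carry, and then deposits the result in $\mathrm{root}[v]$.

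For the content clause at the newly written slot $v$, I would unfold the carry in the order imposed by the cascade as $\mathsf{Agg}(\mathrm{root}[v-1], \mathsf{Agg}(\dots, \mathsf{Agg}(\mathrm{root}[0], \vx_t)\dots))$ and invoke the inductive hypothesis on each cleared slot to rewrite the old contents. The resulting nested aggregate concatenates contiguous blocks to yield $\mathsf{Agg}(\vx_{t-2^v+1},\dots,\vx_t)$, the aggregate of the most recent $2^v$ tokens, and the divisibility $2^v\mid(t{+}1)$ is immediate from the definition of $v$. For each untouched slot $k>v$ that remains non-empty, the contents persist from the previous iteration; both clauses at time $t$ must still be re-derived, which I would do by tying that slot to the bit-pattern of $(t{+}1)$ via the counter correspondence and verifying that the asserted divisibility $2^k\mid(t{+}1)$ and the claimed window $\vx_{t-2^k+1},\dots,\vx_t$ remain consistent with what the IH records at time $t-1$.

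The principal obstacle is precisely this last verification for untouched higher slots: the inductive hypothesis only delivers "most-recent $2^k$ at time $t-1$", whereas the statement asserts "most-recent $2^k$ at time $t$", and reconciling these requires additional counter-based bookkeeping that simultaneously constrains the block window and the divisibility for every occupied $k$. I would expect to need an auxiliary invariant linking the bit-pattern of $(t{+}1)$ to the block window held by each non-empty slot in lockstep, so that all clauses can be re-established at time $t$ for both the newly written slot and the untouched higher slots within a single coherent induction.
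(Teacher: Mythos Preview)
Your inductive skeleton and binary-counter analysis match the paper's proof exactly; for the untouched higher slots the paper simply asserts they ``keep their invariant.''

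The obstacle you flag, however, is not a missing auxiliary lemma but an error in the proposition as stated. Take $t=2$: after processing $\vx_0,\vx_1,\vx_2$ one has $\texttt{root}[0]=\vx_2$ and $\texttt{root}[1]=\mathsf{Agg}(\vx_0,\vx_1)$. For $k=1$ the proposition claims that $\texttt{root}[1]$ is the aggregate of the most recent two tokens $\vx_{t-2^k+1},\dots,\vx_t=\vx_1,\vx_2$ and that $t{+}1=3$ is divisible by $2$; both clauses fail. No strengthening of the induction hypothesis can rescue a false conclusion, so the plan of adding an auxiliary invariant ``so that all clauses can be re-established'' cannot succeed.

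The invariant actually needed---and the one invoked in the proof of Theorem~\ref{thm:equivalence}---is that the non-empty slots, read MSB\,$\to$\,LSB, hold aggregates of \emph{contiguous disjoint} blocks of sizes $2^k$ that partition $\vx_0,\dots,\vx_t$ left to right; only the \emph{lowest} non-empty slot ever holds the genuinely most-recent block, and the correct divisibility condition is that bit $k$ of $t{+}1$ is set (equivalently $\texttt{root}[k]$ is non-empty iff $\lfloor (t{+}1)/2^k\rfloor$ is odd). Under that corrected formulation your induction goes through with no extra machinery: the carry cascade writes the rightmost size-$2^v$ block into slot $v$ exactly as you argue, and every untouched slot $k>v$ retains the same block it held at time $t{-}1$, which remains its correct segment of the partition at time $t$. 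The ``auxiliary invariant'' you anticipate is precisely this corrected statement, used as a replacement rather than a supplement.
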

\begin{proof}
By induction on $t$.  The base case $t{=}0$ is immediate.  For the inductive
step, the carry chain merges two adjacent blocks of size~$2^{k}$ precisely
when bit~$k$ flips from \texttt{1} to \texttt{0} in the binary counter.  The
merged value therefore covers the $2^{k+1}$ most recent tokens and is stored at
position $k{+}1$, where divisibility holds.  Untouched positions keep their
invariant.
\end{proof}
% \vspace{-2mm}
%
\EquivBlelloch*
\begin{proof}
By~\Cref{prop:block-invariant} the occupied roots partition the
first $t{+}1$ tokens into blocks whose sizes are decreasing powers of two when
listed MSB\,→\,LSB.  This is exactly the leaf order of the perfect binary tree
used by the static algorithm.  Each block’s internal value was constructed by
the same sequence of merges as in that tree; aggregating the blocks
left–to–right therefore reproduces the tree’s evaluation order and thus its
numeric result.
\end{proof}
% \vspace{-2mm}
\MemoryBound*
\begin{proof}
The binary representation of $t{+}1$ contains at most
$\lfloor\log_2(t{+}1)\rfloor{+}1$ bits, and there is at most one root per bit.
\end{proof}
\textbf{Work.}  Inserting a new element touches exactly the trailing
\texttt{1}--bits of $t$; the expected number of such bits is~$2$, so the
amortised number of $\mathsf{Agg}$ calls per element is constant.

Together,~\Cref{thm:equivalence} and~\Cref{cor:memory} show
that the online binary–counter scan is an \emph{optimal–space, streamable}
realisation of the Blelloch parenthesisation, \textbf{extending prefix–scan techniques
to non–associative operators without increasing asymptotic cost in time}.  This
flexibility enables a larger class of \emph{prefix–scannable models}: sequence models
whose per–token state update is any binary operator that admits
$\mathcal O(\log n)$ space $O(1)$ time online evaluation via the mechanism above. We provide further analytical details of PSMs in \Cref{sec:psm_details}.

\end{document}